\theoremstyle{plain}
\newtheorem{theorem}{Theorem}[section]
\newtheorem{proposition}[theorem]{Proposition}
\newtheorem{lemma}[theorem]{Lemma}
\newtheorem{corollary}[theorem]{Corollary}
\theoremstyle{definition}
\newtheorem{definition}[theorem]{Definition}
\newtheorem{assumption}[theorem]{Assumption}
\theoremstyle{remark}
\newcommand{\edit}[1]{{\color{black} #1}}
\icmltitlerunning{Navigating the Social Welfare Frontier}
\newcommand{\R}{\mathbb{R}}
\newcommand{\states}{\mathcal{S}}
\newcommand{\actions}{\mathcal{A}}
\newacronym[longplural=Markov decision processes]{mdp}{MDP}{Markov decision process}
\newacronym[longplural = multi-model MDPs] {mmdp}{MMDP}{multi-model Markov decision process}
\newacronym{cmdp}{CMDP}{contextual Markov decision process}
\newacronym[longplural = partially-observable MDPs]{pomdp}{POMDP}{partially-observable MDP}
\newacronym[longplural = decision makers]{dm}{DM}{decision maker}
\newacronym[longplural = transition probability matrices]{tpm}{TPM}{transition probability matrix}
\newacronym[longplural = discrete time Markov chains]{dtmc}{DTMC}{discrete time Markov chain}
\newacronym{rl}{RL}{reinforcement learning}
\newacronym{rmab}{RMAB}{restless multi-armed bandit}
\newacronym{rlhf}{RLHF}{reinforcement learning with human feedback}
\definecolor{darkgreen}{rgb}{0.0, 0.5, 0.0}
\definecolor{darkgreen}{rgb}{0.0, 0.5, 0.0}
\begin{document}

\twocolumn[
    \icmltitle{Navigating the Social Welfare Frontier: \\ Portfolios for Multi-objective Reinforcement Learning}
    
    
    
    \icmlsetsymbol{equal}{*}
    
    \begin{icmlauthorlist}
    \icmlauthor{Cheol Woo Kim}{equal,har}
    \icmlauthor{Jai Moondra}{equal,geo}
    \icmlauthor{Shresth Verma}{har}
    \icmlauthor{Madeleine Pollack}{mit}
    \icmlauthor{Lingkai Kong}{har}
    \icmlauthor{Milind Tambe}{har,comp}
    \icmlauthor{Swati Gupta}{mit}
    \end{icmlauthorlist}
    
    \icmlaffiliation{har}{Harvard University, Cambridge, USA}
    \icmlaffiliation{comp}{Google Deepmind}
    \icmlaffiliation{geo}{Georgia Institute of Technology, Atlanta, USA}
     \icmlaffiliation{mit}{Massachusetts Institute of Technology, Cambridge, USA}
    
    \icmlcorrespondingauthor{Cheol Woo Kim}{cwkim@seas.harvard.edu}
    
    \icmlkeywords{Machine Learning, ICML}
    
    \vskip 0.3in
    ]



\printAffiliationsAndNotice{\icmlEqualContribution} 


\begin{abstract}
    In many real-world applications of \gls{rl}, deployed policies have varied impacts on different stakeholders, creating challenges in reaching consensus on how to effectively aggregate their preferences. Generalized $p$-means form a widely used class of social welfare functions for this purpose, with broad applications in fair resource allocation, AI alignment, and decision-making. This class includes well-known welfare functions such as Egalitarian, Nash, and Utilitarian welfare. However, selecting the appropriate social welfare function is challenging for decision-makers, as the structure and outcomes of optimal policies can be highly sensitive to the choice of $p$. To address this challenge, we study the concept of an $\alpha$-approximate portfolio in RL, a set of policies that are approximately optimal across the family of generalized $p$-means for all $p \le 1$. We propose algorithms to compute such portfolios and provide theoretical guarantees on the trade-offs among approximation factor, portfolio size, and computational efficiency. Experimental results on synthetic and real-world datasets demonstrate the effectiveness of our approach in summarizing the policy space induced by varying $p$ values, empowering decision-makers to navigate this landscape more effectively.
\end{abstract}

\section{Introduction}\label{sec:intro}

In this paper, we study a reinforcement learning (RL) setting where a deployed policy impacts multiple stakeholders in different ways. Each stakeholder is associated with a unique reward function, and the goal is to train a policy that adequately aggregates their preferences. 

This setting, which is often modeled using multi-objective reinforcement learning (MORL), arises in many RL applications, such as fair resource allocation in healthcare \cite{verma2024}, cloud computing \cite{perez09, hao23} and communication networks \cite{wu18, chen21}. Recently, with the rise of large language models (LLMs), reinforcement learning from human feedback (RLHF) techniques that reflect the preferences of heterogeneous individuals have also been explored \cite{Chakraborty24, zhong2024rlhf, park2024rlhf}. 

Preference aggregation in such scenarios is often achieved by choosing a social welfare function, which takes the utilities of multiple stakeholders as input and outputs a scalar value representing the overall welfare \cite{Yu24, cousins2024welfare, verma2024, fan23, zhong2024rlhf, park2024rlhf, Chakraborty24}.  However, selecting the appropriate social welfare function is a nontrivial task. \edit{Different functions encode distinct fairness criteria, and the resulting policies can lead to vastly different outcomes for stakeholders depending on the choice of the social welfare function.}

In this work, we focus on a class of social welfare functions known as generalized $p$-means, a widely used class of social welfare functions in algorithmic fairness and social choice theory. \edit{Each choice of $p$ represents a distinct notion of fairness,} and the $p$-means unify commonly used welfare functions such as Egalitarian welfare ($p = -\infty$), Nash welfare ($p = 0$) and Utilitarian welfare ($p = 1$), providing a smooth transition between these principles. Notably, this is known to be the only class of social welfare functions that satisfy several key axioms of social welfare, such as monotonicity, symmetry, and independence of scale \cite{roberts_interpersonal_1980, moulin_fair_2003,cousins23, pardeshi2024learning}.

\edit{In practice, the right choice of $p$ is often unclear in advance, and the decision-maker must understand how policies vary with $p$ in order to make informed choices about which $p$ (and thus which policy) to adopt. Small changes in $p$ can sometimes lead to dramatically different policies, and selecting a policy optimized for an arbitrary $p$ can lead to poor outcomes under a different 
$p$ value.} Despite these challenges, much of the existing work assumes a fixed social welfare function — and hence a fixed value of $p$ — is given \cite{hayes_practical_2022}.  

To address this challenge, \edit{we propose a method to compute a small yet representative set of policies that covers the entire spectrum of fairness criteria represented by  $p \le 1$}. Our main algorithm, $p$-\textsc{MeanPortfolio}, sequentially selects finite $p$ values starting from $-\infty$ to 1. These values are chosen so that the optimal policies at these points sufficiently cover the entire range of $p \le 1$ for a given approximation factor $\alpha$. We also propose a computationally efficient heuristic algorithm, which adaptively selects the next $p$ value from the intervals formed by previously chosen $p$ values. 

The portfolios provide a structured summary of approximately optimal policies, allowing decision-makers to navigate the implications of different $p$ values efficiently. \edit{The decision-maker can review the impact of the choice of $p$ on the structure of policies, on relevant dimensions of interest, and make an informed choice about which policy to deploy.}


\begin{figure}[t]
    \centering
    \includegraphics[width=\linewidth,height=3.2in]{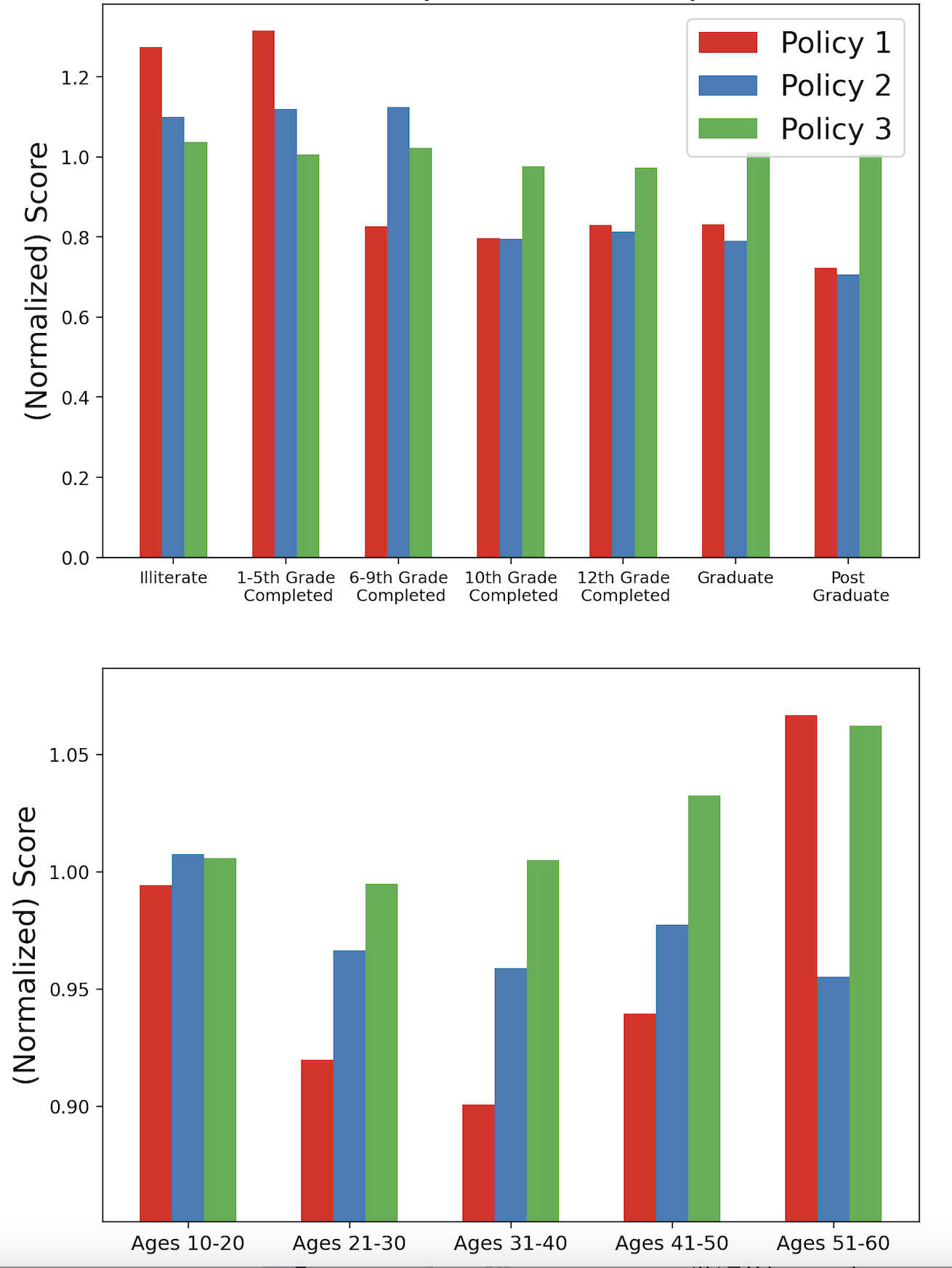}
    \caption{Portfolio of three policies for the healthcare intervention setting (see Section \ref{sec:exp} for details), obtained using Algorithm \ref{alg:portfolios-for-rlhf} with $\alpha = 0.60$. The bar plots show total rewards induced by each policy across different education (top) and age (bottom) brackets. A ``score'' of $1.0$ corresponds to a baseline policy (not shown in the figure) used for comparison. The three policies impact various education and age brackets differently, offering a theoretically sound and diverse set of options for decision-makers.}
    \label{fig: sclm-education-age}
\end{figure}

\subsection{Summary of Contributions}
\label{subsec:contribution}
In this paper, we explore the concept of $\alpha$-approximate portfolios for preference aggregation in MORL. We summarize our contributions as follows:
\begin{enumerate}[leftmargin=*]
    \item We propose Algorithm $p$-\textsc{MeanPortfolio} (Algorithm~\ref{alg:portfolios-for-rlhf}) to compute a finite portfolio of policies that is $\alpha$-approximate for the generalized $p$-means objectives for any value of $p \le 1$. We provide theoretical guarantees, including an upper bound on the portfolio size and the number of MDPs solved to optimality, both expressed in terms of the approximation factor $\alpha$.

    \item We introduce a lightweight heuristic \textsc{Budget-ConstrainedPortfolio} (Algorithm \ref{alg:budget}) that reduces computational costs compared to Algorithm $p$-\textsc{MeanPortfolio} while maintaining high-quality portfolio generation.

    \item We theoretically show that the search for an optimal policy for a given $p$-mean can be efficiently warm-started using the optimal policy for a different $p$-mean.

    \item We evaluate our approach on three different domains, spanning synthetic to real-world problems. Our results show that a small portfolio can achieve near-optimal performance across all $p \le 1$. Moreover, the heuristic \textsc{Budget-ConstrainedPortfolio} constructs portfolios that closely match those produced by Algorithm $p$-\textsc{MeanPortfolio}, while significantly reducing the  computational cost.
\end{enumerate}
The literature in RL with multiple stakeholders has primarily focused on optimizing policies for a given choice of composite objective. However, this work shifts the focus toward constructing a small set of policies with theoretical guarantees. This approach better captures the trade-offs within the actionable policy space, rather than relying on modeling choices on how these objectives should be formulated. 

\subsection{Example}
\label{subsec:example}
An illustrative example we consider in this work is a public health application, where the staff of a healthcare organization aims to train an intervention policy for multiple beneficiaries under a limited budget \citep{verma2024}. This setting can be captured by \glspl{rmab} \cite{whittle98}, which model sequential resource allocation problems under budget constraints. In this context, each beneficiary is associated with a state variable representing their level of engagement with the healthcare program, and the overall state of the problem is the concatenation of the states of all beneficiaries. At each time step, a policy determines which beneficiary to intervene on, subject to the budget constraint.
Depending on the reward function used to train a policy, different socio-demographic groups among the beneficiaries can be prioritized. For example, one reward function might encode the preference to prioritize older populations, while another reward function might encode the preference to prioritize low-income populations. The decision-maker (e.g., healthcare organization staff) must train a policy that balances these conflicting objectives.  

See Figure \ref{fig: sclm-education-age} for a demonstration of a portfolio generated using the proposed method. The portfolio consists of three policies, each affecting stakeholders differently based on age and education levels. This perspective helps staff understand the trade-offs between operational choices. For details on the experiment, refer to Section \ref{sec:exp}.

\section{Related Work}
\label{sec:related}

\textbf{Social Welfare Function and RL.}
In RL with multiple stakeholders, various social welfare functions have been explored, including generalized Gini-Welfare \cite{Yu24, cousins2024welfare}, $p$-means \cite{verma2024, fan23, cousins2024welfare}, proportional-fairness \cite{ju2024achieving}, Nash welfare \cite{mandal2023sociallyfairreinforcementlearning}, among others. \cite{alamdari2024policy} investigated ordinal social welfare functions rather than cardinal ones. 

A related line of work has emerged in RLHF for LLMs, where social choice theory has been used to aggregate multiple reward models. For example, approaches leveraging Nash welfare \cite{zhong2024rlhf}, $\alpha$-fairness \cite{park2024rlhf}, and Egalitarian welfare \cite{Chakraborty24} have been proposed. However, in both RLHF and broader multi-stakeholder RL, existing works typically assume that a fixed social welfare function is given and focus on computing the corresponding optimal policy \cite{hayes_practical_2022}. \edit{As a result, these methods do not offer guidance for settings where the appropriate choice of social welfare function is unclear in advance.}

\textbf{Portfolios in MORL.}
In the broader MORL literature, the concept of portfolios of policies is well-established. A common approach is to compute solutions that approximate the Pareto front \cite{parisi14, moff14, radulescu_multi-objective_2019, hayes_practical_2022}. While the Pareto front provides a characterization of trade-offs between conflicting objectives, it does not assume a specific set of social welfare functions. This generality, while powerful, presents challenges in practical multi-stakeholder settings. That is, the Pareto front does not directly correspond to any specific notion of welfare or utility, making it difficult
to interpret the societal implications of these policies. Furthermore, without an explicit scalarization function to aggregate preferences, decision-makers must choose among Pareto-efficient policies without clear guidance on how to weigh the trade-offs, which is often critical in real-world applications where diverse preferences must be aggregated into a single deployable policy. Additionally, the Pareto front is typically large, making it prohibitively expensive to compute in practice \cite{hayes_practical_2022}. 

A more refined notion of portfolio similar to the one we study is the concept of convex coverage sets. However, this concept is limited to weighted linear combinations of reward functions \cite{roijers13} and does not account for $p-$means. \edit{As a result, existing MORL methods, whether based on the Pareto frontier or weighted linear combinations, do not provide guarantees with respect to $p$-mean welfare functions.  We include additional discussion comparing our method with several well-known MORL algorithms proposed in \cite{yang19, reymond22, Alegre+2023, Yu24} in Appendix \ref{sec:comparisons}.}

\textbf{Portfolios in Optimization.}
In the optimization literature, \cite{drygala2024data} studied portfolios with stochastic guarantees for fixed objectives. Recent work in approximation algorithms also explores the notions of small-sized portfolios that approximately optimize a class of social welfare functions \cite{gupta2024a, gupta2024b, goel2006simultaneous, golovin2008all,chakrabarty2019approximation}, by exploiting the combinatorial properties of facility location, scheduling and set cover problems. While these works operate in well-structured combinatorial settings, our work addresses significantly more complex landscape of RL where the policy space is vast and computing optimal policies is inherently challenging. This necessitates novel algorithmic and theoretical advancements to efficiently construct portfolios with strong guarantees.



\section{Preliminaries}
\label{sec:setting}

In this section, we provide preliminaries on MORL, $p$-mean social welfare functions, and portfolios.

\subsection{Multi-Objective Reinforcement Learning}
\label{subsec:morl}

We consider a multi-objective \gls{mdp}, defined by the tuple $\mathcal{M} = (\states, \actions, {P}, \textbf{R} = (R_i)_{i \in [N]})$, where $\states$ denotes a finite state space, $\actions$ denotes a finite action space, and $P: \states \times \actions \to \Delta_{\states}$ is the transition probability. Additionally, we assume that we start in a fixed initial state $s_1 \in \states$ and $H$ is the time horizon. In the specific MORL context we consider, there are $N$ distinct stakeholders, each associated with a reward function $R_i: \states \times \actions \to \mathbb{R}_{>0}$ for $i \in [N]$. 
A policy $\pi: \states \times [H] \to \Delta(\actions)$ defines a distribution over actions at a given time $h \in [H]$. We use $\tau = (s_1, a_1, \dots, s_H)$ to denote a trajectory, which is the sequence of states and actions from time $1$ to $H$. Throughout, we assume that all reward functions are bounded:
\smallskip
\begin{assumption}[Bounded Reward]\label{assumption:bound}
    There exist strictly positive scalars $U, L$ such that $L \leq R_i(\cdot) \leq U$ for all $i \in [N]$. We call $\kappa := U/L$ the \textit{condition number} of rewards.
\end{assumption}
The $N$ stakeholders could represent different entities depending on the application; for example, different constituent demographics of a democracy, stakeholders in a company, or simply just abstract differences in values or preferences.

\subsection{Social Welfare Function}
\label{subsec:swf}

We consider the following family of social welfare functions $f(\cdot, p): \R^N_{> 0} \to \R_{>0} $ called the \textit{generalized $p$-mean}. It is defined for each $p \in [-\infty, 1]$ and strictly each positive vector $\mathbf{x} = (x_1,\dots,x_N) \in \R^N_{> 0}$ as follows:
\[
f(\mathbf{x}, p) =
\begin{cases}
\min_{i \in [N]} x_i & \text{if} \ p = -\infty, \\
\displaystyle
\left(\frac{1}{N} \sum_{i \in [N]} x_i^p\right)^{\!1/p}
& \text{if } p \not\in \{-\infty, 0\}, \\[1em]
\displaystyle
\left(\prod_{i \in [N]} x_i\right)^{\!1/N}
& \text{if } p = 0.
\end{cases}
\]

\edit{We also provide an illustrative example in Appendix \ref{sec:pmeans} to provide the intuitive impact of various $p$-mean objectives.} We will use the following \textit{monotonicity} property:

\smallskip
\begin{lemma}[Theorem 1, Chapter 3.3.1 \cite{Bullen03}]\label{lem: social-welfare-monotonicity}
    For all strictly positive vectors $\mathbf{x} \in \R^N_{> 0}$, and for all $p, q \le 1$ with $p < q$, we have $f(\mathbf{x}, p) \le f(\mathbf{x}, q)$.
\end{lemma}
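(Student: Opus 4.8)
The plan is to establish the stronger fact that, for every fixed $\mathbf{x} \in \R^N_{>0}$, the map $p \mapsto f(\mathbf{x}, p)$ is non-decreasing on $[-\infty, 1]$; the claimed inequality is then immediate. I would split into cases by the signs of $p$ and $q$, the workhorse being Jensen's inequality applied to the power map $t \mapsto t^{r}$, which is convex on $\R_{>0}$ when $r \ge 1$ and concave when $0 \le r \le 1$.

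\textbf{Core case (finite, nonzero, same sign).} Suppose first $0 < p < q \le 1$, and set $r = q/p > 1$. Then, by Jensen applied to the convex function $t \mapsto t^{r}$ at the points $x_i^p$,
\[
f(\mathbf{x},q)^q = \frac{1}{N}\sum_{i \in [N]} x_i^q = \frac{1}{N}\sum_{i \in [N]} \left(x_i^p\right)^{q/p} \ge \left(\frac{1}{N}\sum_{i \in [N]} x_i^p\right)^{q/p} = f(\mathbf{x},p)^q,
\]
and since $x \mapsto x^{1/q}$ is increasing on $\R_{>0}$, taking $q$-th roots yields $f(\mathbf{x},q) \ge f(\mathbf{x},p)$. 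For $p < q < 0$ the same substitution gives $r = q/p \in (0,1)$, so $t \mapsto t^{r}$ is concave and Jensen reverses; but now raising both sides to the power $1/q < 0$ reverses the inequality a second time, again giving $f(\mathbf{x},q) \ge f(\mathbf{x},p)$.

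\textbf{Degenerate endpoints and assembly.} For $p = 0$ I would invoke the classical weighted AM–GM inequality in both directions: $\left(\prod_i x_i\right)^{1/N} \le f(\mathbf{x},s)$ for $s \in (0,1]$ and $f(\mathbf{x},s) \le \left(\prod_i x_i\right)^{1/N}$ for $s < 0$ (equivalently, check continuity of $p \mapsto f(\mathbf{x},p)$ at $0$ via $\log f(\mathbf{x},p) = \tfrac{1}{p}\log\!\big(\tfrac1N\sum_i x_i^p\big) \to \tfrac1N\sum_i \log x_i$). For $p = -\infty$: if $q \ge 0$ then $f(\mathbf{x},q) \ge \min_i x_i$ trivially, and if $q < 0$ then $x_i \ge \min_j x_j$ implies $x_i^q \le (\min_j x_j)^q$, so $\tfrac1N\sum_i x_i^q \le (\min_j x_j)^q$, and raising to $1/q < 0$ gives $f(\mathbf{x},q) \ge \min_j x_j$. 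Finally, any pair $p < q$ in $[-\infty,1]$ either falls directly in one of the above cases or, when $p < 0 < q$, is handled by chaining: pick $p < r < 0 < s < q$ and use $f(\mathbf{x},p) \le f(\mathbf{x},r) \le f(\mathbf{x},0) \le f(\mathbf{x},s) \le f(\mathbf{x},q)$.

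The routine but error-prone part is the sign bookkeeping when $p$ or $q$ is negative: each sign flip — convexity versus concavity of $t \mapsto t^{q/p}$, and raising to a negative power — must be tracked so the inequalities compose in the right direction. The only genuinely separate piece of work is the behavior at $p = 0$, which needs either a direct appeal to weighted AM–GM or a short limit computation to justify continuity there and thereby glue the monotonicity on $(-\infty,0)$ to that on $(0,1]$; everything else is a clean application of Jensen's inequality. (Since the statement is exactly Theorem 1 of Chapter 3.3.1 in \cite{Bullen03}, one may alternatively just cite it.)
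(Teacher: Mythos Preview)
Your proof is correct: the Jensen-inequality argument with the sign bookkeeping you describe is the standard route to the power-mean inequality, and your handling of the endpoints $p=0$ and $p=-\infty$ is fine. The paper itself does not prove this lemma at all --- it simply cites it as Theorem~1, Chapter~3.3.1 of \cite{Bullen03} --- so there is nothing to compare against beyond noting that your argument is essentially the classical one found in that reference (a point you already acknowledge in your final parenthetical).
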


The generalized $p$-means provides a method to aggregate heterogeneous preferences by assigning a scalar value to any policy $\pi$. We focus on two aggregation rules, denoted by $\ell \in \{1, 2\}$, that have been previously proposed in the literature. Each combination of an aggregation rule $\ell \in \{1, 2\}$ and a parameter $p$ defines a specific aggregation function, denoted as $v^{(\ell)}(\cdot, p)$. Given a trajectory $\tau = (s_1, a_1, \dots, s_H, a_H)$, let $G_i(\tau) = \sum_{h=1}^{H} R_i(s_h, a_h), i \in [N]$, represent the total reward over the trajectory, and $\textbf{G}(\tau) \in \R^N$ the vector with $G_i(\tau)$ as its $i$th entry. The aggregation functions are defined as follows:
\begin{align}
    v^{(1)}(\pi, p) &= \mathbb{E}_{\tau \sim \pi}\left[f\big(\textbf{G}(\tau), p\big)\right], \label{eqn: first-aggregation-function}\\[10pt]
    v^{(2)}(\pi, p) &= f\big(\mathbb{E}_{\tau \sim \pi}[\textbf{G}(\tau)], p\big). \label{second-aggrehation-function}
\end{align}
In the MORL literature, $v^{(1)}(\cdot, p)$ is referred to as \textit{expected scalarized returns} (ESR), while $v^{(2)}(\cdot, p)$ is known as \textit{scalarized expected returns} (SER). In general, ESR is preferred when the expected total reward from a single execution is the primary focus, whereas  SER is more suitable when policies are executed multiple times and rewards accumulate over iterations. Computing the optimal policy for $v^{(\ell)}(\cdot, p)$ requires specialized algorithms different from standard RL methods, which are not directly applicable. For example, \cite{fan23} and \cite{agarwal22} developed algorithms for $v^{(1)}(\cdot, p)$ and $v^{(2)}(\cdot, p)$, respectively. More detailed comparisons between these scalarization techniques and their solution algorithms are available in \citet{agarwal22}, \citet{roijers13}, \citet{radulescu_multi-objective_2019}, \citet{agarwal22}, and \citet{fan23}.
While these aggregation rules are often studied independently in MORL, our work 
provides a unified framework applicable to both methods. We show that the proposed algorithms apply 
to both settings leveraging similar theoretical foundations.

\subsection{Portfolio of Policies for All $p$-means}\label{subsec:agg_def}


In this section, we introduce the key definitions for portfolios. For brevity, we denote the maximum value function $v^{(\ell)}_*: [-\infty, 1] \to \R$ defined as $\displaystyle v^{(\ell)}_*(p) := \max_{\pi \in \Pi} v^{(\ell)}(\pi, p)$. When $\ell$ and $\Pi$ are clear from context, we denote the optimal policy for $v^{(\ell)}(\cdot, p)$ as $\pi_p$. 

\smallskip
\begin{definition}[$\alpha$-approximation\footnote{The $\alpha$-approximation we refer to is distinct from the $\alpha$-fairness mentioned in Section \ref{sec:related}. To avoid confusion, from this point onward, all instances of $\alpha$ will exclusively refer to the approximation factor.}]\label{def: alpha-approximation}
    Given a $p \le 1$, aggregation rule $\ell \in \{1, 2\}$, and an approximation factor $\alpha \in (0, 1)$, a policy $\pi \in \Pi$ is an $\alpha$-approximation to the aggregation function $v^{(\ell)}(\cdot, p)$ if the value $v^{(\ell)}(\pi, p)$ achieved by $\pi$ is within factor $\alpha$ of the maximum achievable value of $v^{(\ell)}(\cdot, p)$ by policies in $\Pi$, that is,
    \[
        v^{(\ell)}(\pi, p) \geq \alpha \times \max_{\pi' \in \Pi} v^{(\ell)}(\pi', p) = \alpha \times v_*^{(\ell)}(p).
    \]
    When $\ell$ and $\Pi$ are clear from context, we say for brevity that $\pi$ is an $\alpha$-approximate policy for $p$.
\end{definition}

\smallskip
\begin{definition}[Portfolio]
    Given aggregation rule $\ell \in \{1, 2\}$ and approximation factor $\alpha \in (0, 1)$, a set of policies $\Pi' \subseteq \Pi$ is called an $\alpha$-approximate portfolio for aggregation functions $v^{(\ell)}$ if for each $p \le 1$, there exists a policy in $\Pi'$ that is an $\alpha$-approximation to $v^{(\ell)}(\cdot, p)$.
\end{definition}

\section{Portfolio Algorithms}
\label{sec:alg}
In this section, we present our main algorithmic and theoretical results. Before presenting the details of our proposed method, we first define the notion of an oracle, which serves as a key component in our algorithm and analysis. Recall that for a fixed $p$, aggregation function $v^{(\ell)}(\cdot, p), \ell \in \{1, 2\}$, and a given set of policies $\Pi$, the associated welfare maximizing policy can be obtained by solving the following problem:
\begin{equation}
    v^{(\ell)}_*(p) = \max_{\pi \in \Pi} v^{(\ell)}(\pi, p).
    \label{eq:max_aggre}
\end{equation}
The algorithms we develop assume that 
\eqref{eq:max_aggre} can be solved, and we refer to each instance of solving \eqref{eq:max_aggre} as an ``oracle''. This terminology is used to facilitate the discussion of oracle complexity, which quantifies the number of times Problem~\eqref{eq:max_aggre} must be solved to construct a portfolio. 

If $\Pi$ is defined as the set of all possible policies in the MDP, as in the standard RL setting, each oracle call can be computationally expensive. Moreover, achieving exact optimality for Problem~\eqref{eq:max_aggre} may not always be feasible due to limitations in RL algorithms. This means that when measuring the approximation factor of a policy \(\pi\) with respect to an aggregation function \(v^{(\ell)}(\cdot, p)\), the maximum value \(v^{(\ell)}_*(p)\) might not be attainable or computable. In this case, we approximate \(v^{(\ell)}_*(p)\) as the value achieved by applying the given RL algorithm.  This approach measures the approximation factor relative to what is achievable using the algorithm at hand.

In practice, $\Pi$ may be a small set of pre-trained policies, particularly in scenarios where training new policies is infeasible. In such cases, Problem~\eqref{eq:max_aggre} can be solved efficiently by enumerating over the policies $\pi \in \Pi$ and using Monte Carlo simulations to estimate the value $v^{(\ell)}(\pi, p)$.





\subsection{Algorithm $p$-\textsc{MeanPortfolio}}\label{subsec:agg_alg}

\begin{algorithm}[t]
   \caption{\textsc{$p$-MeanPortfolio}$(\mathcal{M}, \Pi, \ell, \alpha)$}
   \label{alg:portfolios-for-rlhf}
\begin{algorithmic}[1]
   \INPUT (i) MDP $\mathcal{M} = (\states, \actions, {P}, \textbf{R} = (R_i)_{i \in [N]})$ with $N$ reward functions, (ii) feasible set of policies $\Pi$ for $\mathcal{M}$, (iii) aggregation rule $\ell \in \{1, 2\}$, and (iv) desired approximation factor $\alpha \in (0, 1)$
   \OUTPUT
      $\alpha$-approximate portfolio $\Pi'$ for the set of 
      aggregation functions $v^{(\ell)}(\cdot, p),  p \le 1$, assuming oracle access to solve problem (\ref{eq:max_aggre})

   \STATE initialize $p_0 = - \frac{\ln N}{\ln (1/\alpha)}$ and $t = 0$
   \STATE initialize portfolio $\Pi' \gets \emptyset$
   \WHILE{$p_t < 1$}
        \STATE add $\pi_{p_t} := {\arg\max}_{\pi \in \Pi} v^{(\ell)}(\pi, p_t)$ to $\Pi'$
        \STATE $p_{t + 1} = \textsc{LineSearch}(v^{(\ell)}, p_t, \Pi, \alpha)$
        \STATE $t \gets t + 1$
   \ENDWHILE
   \STATE \textbf{return} $\Pi'$
\end{algorithmic}
\end{algorithm}

\begin{algorithm}[t]
   \caption{\textsc{LineSearch}$(v^{(\ell)}, p, \Pi, \alpha)$}
   \label{alg: line-search}
\begin{algorithmic}[1]
    \INPUT (i) aggregation function $v^{(\ell)}$ (ii) some $p \in (-\infty, 1)$ (iii) feasible set $\Pi$ of policies, and (iv) desired approximation $\alpha \in (0, 1)$
    \OUTPUT $b^* > p$ such that $\pi_p := {\arg\max}_{\pi' \in \Pi} v^{(\ell)}(\pi', p)$ is an $\alpha$-approximation for 
      $v^{(\ell)}(\cdot, q)$ for all $q \in [p, b^*]$
   \STATE $a \gets p$ and $b \gets 1$
   \STATE initialize $\pi = {\arg\max}_{\pi' \in \Pi} v^{(\ell)}(\pi', p)$
   \WHILE{$v^{(\ell)}(\pi, a) < \alpha \ v_*^{(\ell)}(b)$}\label{step: main-algorithm-while-loop}
       \STATE $q \gets \frac{a + b}{2}$
       \IF{$v^{(\ell)}(\pi, a) \ge \sqrt{\alpha} \ v_*^{(\ell)}(q)$}
           \STATE $a \gets q$ \label{step: line-search-update-lower-bound}
       \ELSE
           \STATE $b \gets q$
       \ENDIF
   \ENDWHILE

   \STATE \textbf{return} $b$

\end{algorithmic}
\end{algorithm}

In this section, we introduce $p$-\textsc{MeanPortfolio} (Algorithm \ref{alg:portfolios-for-rlhf}), which constructs an $\alpha$-approximate portfolio $\Pi'$ for all $p \le 1$. In Theorem \ref{thm: portfolio-with-line-search}, we establish formal bounds on the portfolio size $|\Pi'|$ and the number of oracle calls to solve (\ref{eq:max_aggre}). The full proof is provided in Appendix \ref{sec: omitted-proofs}.

\smallskip
\begin{theorem}\label{thm: portfolio-with-line-search}
    Given an MDP $\mathcal{M}$ with $N$ reward functions with condition number $\kappa$, set $\Pi$ of feasible policies, an aggregation rule $\ell \in \{1, 2\}$, and a desired approximation factor $\alpha \in (0, 1)$, Algorithm $p$-\textsc{MeanPortfolio} returns an $\alpha$-approximate portfolio of policies $\Pi'$ for the set of aggregation functions $\{v^{(\ell)}(\cdot, p): p \le 1\}$. Further,
    \begin{enumerate}
        \item The portfolio size
        \[
            |\Pi'| = O\left(\frac{\ln \kappa}{\ln(1/\alpha)}\right),
        \]
        \item The number of oracle calls by the algorithm is upper bounded by
        \[
            \widetilde{O}\left(\frac{(\ln \kappa)^2 \ln \ln N }{\ln (1/\alpha)}\right),
        \]
        where $\widetilde{O}$ hides all lower order terms.
    \end{enumerate}
\end{theorem}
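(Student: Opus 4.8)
The plan is to analyze Algorithm $p$-\textsc{MeanPortfolio} by first showing correctness (it returns an $\alpha$-approximate portfolio), then bounding the portfolio size via a careful choice of starting point $p_0$ and a "geometric progress" argument on the sequence $p_0 < p_1 < \dots$, and finally bounding the oracle complexity by accounting for the cost of each \textsc{LineSearch} call.

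\textbf{Correctness and the role of $p_0$.} First I would argue that for every $p \le p_0$, the single policy $\pi_{p_0}$ is already an $\alpha$-approximation to $v^{(\ell)}(\cdot, p)$. The intuition is that for very negative $p$ the $p$-mean behaves essentially like the Egalitarian (min) welfare, and the multiplicative gap between $f(\mathbf{x}, p)$ and $f(\mathbf{x}, -\infty) = \min_i x_i$ is controlled: for $p < 0$ one has $\min_i x_i \le f(\mathbf{x}, p) \le N^{-1/p}\min_i x_i$, so $f(\mathbf{x},p)/f(\mathbf{x},-\infty) \le N^{-1/p} = e^{-\ln N / p}$, which at $p = p_0 = -\ln N/\ln(1/\alpha)$ equals $e^{\ln(1/\alpha)} = 1/\alpha$. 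Combining this with Lemma~\ref{lem: social-welfare-monotonicity} (which gives $v^{(\ell)}(\pi, p) \le v^{(\ell)}(\pi, q)$-type monotonicity in $p$, appropriately lifted through the expectations defining $v^{(1)}$ and $v^{(2)}$), one shows $v^{(\ell)}(\pi_{p_0}, p) \ge \alpha\, v_*^{(\ell)}(p)$ for all $p \le p_0$. The monotonicity also needs to be argued to pass through the $\mathbb{E}_{\tau\sim\pi}[\cdot]$ in \eqref{eqn: first-aggregation-function} and through $\mathbb{E}_{\tau\sim\pi}[\mathbf{G}(\tau)]$ in \eqref{second-aggrehation-function}, but in both cases it is just monotonicity of a pointwise inequality under expectation. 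For $p \in (p_0, 1]$, correctness follows from the \textsc{LineSearch} guarantee stated in its OUTPUT spec: each $\pi_{p_t}$ covers the interval $[p_t, p_{t+1}]$, and the while-loop runs until $p_t \ge 1$, so the intervals tile $(-\infty, 1]$. I would also verify the \textsc{LineSearch} invariant: the loop maintains $v^{(\ell)}(\pi_p, a) \ge \sqrt{\alpha}\, v_*^{(\ell)}(q)$ type lower bounds on $a$ and upper bounds on $b$, and returns $b$ only once $v^{(\ell)}(\pi_p, a) \ge \alpha\, v_*^{(\ell)}(b)$; since $v^{(\ell)}(\pi_p, \cdot)$ is monotone nondecreasing and $v_*^{(\ell)}(\cdot)$ is monotone nondecreasing, $v^{(\ell)}(\pi_p, q) \ge v^{(\ell)}(\pi_p, p) = v^{(\ell)}(\pi_p, a_{\text{initial}})$... actually the returned $b^*$ satisfies $v^{(\ell)}(\pi_p, b^*) \ge v^{(\ell)}(\pi_p, p)$ is the wrong direction, so the real content is that $v^{(\ell)}(\pi_p, p) \ge \alpha v_*^{(\ell)}(b^*) \ge \alpha v_*^{(\ell)}(q)$ for all $q \le b^*$, which by $v^{(\ell)}(\pi_p, q) \ge v^{(\ell)}(\pi_p,p)$ for $q\ge p$ gives the $\alpha$-approximation on $[p, b^*]$.

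\textbf{Portfolio size.} The key quantitative step is to show each \textsc{LineSearch} call makes geometric progress: I claim the returned $b^* = p_{t+1}$ satisfies a relation of the form $(1 - p_{t+1}) \le \sqrt{\alpha}\,(1 - p_t)$, or more precisely that the "gap to $1$" shrinks by a constant factor bounded away from $1$ depending on $\alpha$ — equivalently, $p_{t+1} - p_t$ is bounded below in a way that forces the number of steps from $p_0$ to $1$ to be $O(\ln\kappa / \ln(1/\alpha))$. The mechanism: when \textsc{LineSearch} terminates, it failed the $\sqrt{\alpha}$-test at the returned $b$, meaning (at the last bisection where $b$ was updated) $v^{(\ell)}(\pi_p, a) < \sqrt{\alpha}\, v_*^{(\ell)}(b)$, while the loop condition just became false, i.e. $v^{(\ell)}(\pi_p, a) \ge \alpha\, v_*^{(\ell)}(b)$. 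So at the point of return, $\alpha\, v_*^{(\ell)}(b) \le v^{(\ell)}(\pi_p, a) \le v^{(\ell)}(\pi_p, b)$, which combined with a bound on how much $v^{(\ell)}(\pi, \cdot)$ can grow over $[p, b^*]$ — and here the condition number $\kappa$ enters: $f(\mathbf{x}, q)/f(\mathbf{x}, p)$ over $p \le q \le 1$ is at most $\kappa^{\text{something}}$ because all coordinates lie in $[L, U]$ with $U/L = \kappa$, so $v_*^{(\ell)}(1)/v_*^{(\ell)}(-\infty) \le \kappa$ — yields that the total "multiplicative range" $\prod_t$ of factors is at most $\kappa$ and each step contributes a factor $\ge 1/\sqrt\alpha$ (or similar), giving $T = O(\ln\kappa/\ln(1/\alpha))$ steps, hence $|\Pi'| \le T + 1 = O(\ln\kappa/\ln(1/\alpha))$. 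I expect this to be the main obstacle: pinning down the exact invariant that each \textsc{LineSearch} iteration "uses up" a multiplicative factor of at least $\sqrt{\alpha}$ of a global budget of size $\kappa$, and handling the two aggregation rules $\ell \in \{1,2\}$ uniformly (for $\ell=2$ the bound on the range of $f$ on the expected-return vector is direct; for $\ell=1$ one needs $f(\mathbf{G}(\tau), q)/f(\mathbf{G}(\tau), p) \in [1, \kappa]$ pathwise and then take expectations, using that $\mathbf{G}(\tau)$ has coordinates in $[HL, HU]$).

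\textbf{Oracle complexity.} Each iteration of the outer while-loop makes one oracle call for $\pi_{p_t}$ plus the oracle calls inside \textsc{LineSearch}. The bisection in \textsc{LineSearch} halves the interval $[a,b] \subseteq [p_t, 1]$ each iteration and terminates once the multiplicative slack between $v^{(\ell)}(\pi_{p_t}, a)$ and $v_*^{(\ell)}(b)$ is resolved to within the $[\sqrt\alpha, 1]$ window; since $v_*^{(\ell)}$ and $v^{(\ell)}(\pi,\cdot)$ vary by at most a factor $\kappa$ over $[-\infty, 1]$ and one needs to localize the crossing point to multiplicative resolution $\sqrt{\alpha}$, the number of bisection steps per call is $O(\log(\text{range of } p\text{-interval needed to change } v \text{ by } \sqrt\alpha))$. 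I would bound the relevant $p$-interval length using a Lipschitz/derivative estimate on $p \mapsto f(\mathbf{x}, p)$: its logarithmic derivative is controlled by $\ln\kappa$ and by $\ln N$ terms (the $\ln\ln N$ in the bound comes from the behavior of $f(\mathbf{x},p)$ near $p=0$ and from the $N^{1/|p|}$ factors), so each \textsc{LineSearch} uses $\widetilde{O}(\ln\kappa \cdot \ln\ln N)$ oracle calls. Multiplying by the number of outer iterations $O(\ln\kappa/\ln(1/\alpha))$ gives the claimed $\widetilde{O}\big((\ln\kappa)^2 \ln\ln N / \ln(1/\alpha)\big)$. The bookkeeping of exactly which lower-order terms get absorbed into $\widetilde O$ — in particular the $\log$ of the desired bisection precision and any dependence on $H, U, L$ beyond $\kappa$ — is routine but tedious, and I would defer it to the appendix as the theorem statement already signals.
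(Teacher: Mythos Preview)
Your overall structure---correctness via $p_0$ and \textsc{LineSearch}, size via a geometric-budget argument, oracle complexity via bisection plus a slope bound---matches the paper's proof. However, two of your key quantitative claims are off, and one of them would not go through as stated.

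\textbf{Size bound: progress is in value space, not $p$-space.} Your claim that $(1 - p_{t+1}) \le \sqrt{\alpha}\,(1 - p_t)$, or that the ``gap to $1$'' shrinks geometrically, is false in general and is not what the paper proves. If $v_*^{(\ell)}$ is flat on one stretch of $p$ and steep on another, the $p_t$ can be arbitrarily unevenly spaced. What the paper actually shows (and what you also arrive at later in the same paragraph) is geometric progress in \emph{value} space: whenever $p_{t+1} < 1$, the returned $b^* = p_{t+1}$ satisfies $v_*^{(\ell)}(p_{t+1}) \ge v_*^{(\ell)}(p_t)/\sqrt{\alpha}$. This comes from observing that if $b^* \ne 1$ then $b$ was updated at least once, and at the last such update the $\sqrt{\alpha}$-test failed, giving $v_*^{(\ell)}(p_t) = v^{(\ell)}(\pi_{p_t}, p_t) \le v^{(\ell)}(\pi_{p_t}, a) < \sqrt{\alpha}\, v_*^{(\ell)}(b^*)$. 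Since $v_*^{(\ell)}(1)/v_*^{(\ell)}(p_0) \le HU/(HL) = \kappa$, the telescoping product gives $K = O(\ln\kappa/\ln(1/\alpha))$. So drop the $p$-space claim entirely and keep only the value-space argument.

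\textbf{Oracle complexity: where $\ln\ln N$ comes from.} The $\ln\ln N$ term is not due to the derivative of $f(\mathbf{x},p)$ near $p=0$. The paper bounds the logarithmic derivative uniformly by $\kappa\ln\kappa$ (no $N$ dependence there). The $\ln\ln N$ enters because the initial \textsc{LineSearch} interval has length $1 - p_0 = 1 + \ln N/\ln(1/\alpha)$, and bisection halves $b - a$ each step; after $j$ steps, $b - a \le (1 - p_0)\,2^{-(j-1)}$. Combining the slope bound $\ln\bigl(v_*^{(\ell)}(b)/v_*^{(\ell)}(a)\bigr) \le (\kappa\ln\kappa)(b-a)$ with the loop invariant $v^{(\ell)}(\pi,a) \ge \sqrt{\alpha}\,v_*^{(\ell)}(a)$ and the non-termination condition $v^{(\ell)}(\pi,a) < \alpha\,v_*^{(\ell)}(b)$ forces $\ln(v_*^{(\ell)}(b)/v_*^{(\ell)}(a)) \ge \tfrac{1}{2}\ln(1/\alpha)$, hence $j = O\bigl(\log(\kappa\ln\kappa \cdot \ln N / \ln^2(1/\alpha))\bigr)$. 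That is the per-call cost; it is additive in $\ln\kappa$ and $\ln\ln N$, not multiplicative as you wrote. Multiplying by the number of outer iterations recovers the stated bound after absorbing lower-order terms.
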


Here, we explain the high-level ideas behind the algorithm. The algorithm iteratively chooses an increasing sequence of $p$ values $p_0 < p_1 < \ldots < p_K = 1$ using a line search subroutine (Algorithm \ref{alg: line-search}). It ensures that $\pi_{p_t}$ is $\alpha$-approximate for all $p \in [p_t, p_{t + 1}]$, and that $\pi_{p_0}$ is $\alpha$-approximate for all $p \in [-\infty, p_0]$.

\textbf{Line search.} The line search subroutine works as follows: given a $p \le 1$, it seeks to find some $b^* \ge p$ such that $\pi_p$ is an $\alpha$-approximation for all $q \in [p, b^*]$. To achieve this, it maintains lower and upper bounds $a, b$ on $b^*$ with $p \le a < b \le 1$ and iteratively refines these bounds. 

At each iteration, the algorithm checks whether $v^{(\ell)}(\pi_p, a) \geq \alpha v_*^{(\ell)}(b)$ (line 3). If this condition holds, then by the monotonicity property of $p$-means (Lemma \ref{lem: social-welfare-monotonicity}), we must have:
\[
    v^{(\ell)}(\pi_p, b) \ge v^{(\ell)}(\pi_p, a) \ge \alpha \ v^{(\ell)}_*(b).
\]
Then, this holds for any $q \in [a, b]$, implying that $\pi_p$ is $\alpha$-approximate across this interval. Therefore, this procedure can safely output $b^* = b$. This establishes the correctness of the algorithm, as the line search successfully identifies the next $p$ value if it terminates.

Bounds $a, b$ are updated in each iteration as follows (lines 4-8): we query the value $v^{(\ell)}_*\left(q\right)$ for $q = \frac{a + b}{2}$. As before, if $v^{(\ell)}(\pi_p, a)$ exceeds $\alpha$ times $v_*^{(\ell)}(q)$, then we know that $\pi_p$ must be an $\alpha$-approximation for any value in $[a, q]$, and the lower bound $a$ can be tightened as $a \gets q$. Otherwise, the upper bound can be tightened as $b \gets q$.

However, as we show in the formal proof in Appendix \ref{sec: omitted-proofs}, we can in fact ensure faster convergence by slightly modifying this step. Instead of checking whether $v^{(\ell)}(\pi_p, a) \ge  \alpha \times v_*^{(\ell)}(q)$, we check the stronger condition $v^{(\ell)}(\pi_p, a) \ge  \sqrt{\alpha} \times v_*^{(\ell)}(q)$. Since $\sqrt{\alpha} \ge \alpha$, this stricter condition does not affect correctness but accelerates convergence.

\textbf{Oracle complexity.} We briefly sketch how we bound the oracle complexity (i.e., number of oracle calls to solve Problem (\ref{eq:max_aggre})) by bounding the number of oracle calls in each run of \textsc{LineSearch}. As discussed, in each iteration of the line search algorithm, the distance $b - a$ is cut by half since either $b$ or $a$ are updated to $\frac{a + b}{2}$. As we show in Lemma \ref{lem: slope-bound}, this implies that after $j$ iterations of \textsc{LineSearch}, the ratio  $\frac{v^{(\ell)}_*(b)}{v_*^{(\ell)}(a)}$ is upper bounded by $\psi(\kappa, N, \alpha) \times 2^{-j}$, where $\psi$ is some function of condition number $\kappa$, dimension $N$, and approximation factor $\alpha$. By deriving a lower bound on this ratio, we derive an upper bound on the total number of iterations $j$.

\subsection{Heuristic under a Budget Constraint}
\label{subsec:agg_budget}
In $p$-\textsc{MeanPortfolio}, we provide guarantees on the approximation factor $\alpha$. However, achieving these guarantees may require a large number of oracle calls to solve Problem~\eqref{eq:max_aggre} across different values of $p$, which can be impractical when computational resources are severely limited.

One way to reduce oracle calls is to select a smaller $\alpha$. However, while Theorem~\ref{thm: portfolio-with-line-search} provides an upper bound on the number of calls required, the exact number is difficult to determine in advance, making it challenging to balance computational cost with portfolio quality. Alternatively, a budget on the total number of oracle calls can be imposed along $\alpha$, but this introduces its own trade-offs: if $\alpha$ is too large, the algorithm may exhaust calls prematurely, leaving parts of the $p$ range unexplored; if $\alpha$ is too small, it may progress to $p = 1$ too quickly, missing opportunities to refine the portfolio.

To address this limitation, we propose the heuristic \textsc{Budget-ConstrainedPortfolio} (Algorithm \ref{alg:budget}), designed for scenarios with a strict budget $K$ on oracle calls. Unlike $p$-\textsc{MeanPortfolio}, which selects $p$ values in a monotonically increasing order from $-\infty$ to 1, this algorithm dynamically refines the search space based on observed approximation performance. It greedily targets regions where the approximation factor is likely to be weakest, ensuring efficient use of the limited oracle budget.

First, we describe the ideal version of this greedy approach. After $t$ oracle calls, let $\Pi'_t$ denote our current portfolio. The objective at each step is to identify the worst-case $p$ value where the approximation quality of $\Pi'_t$ is the lowest by solving the following problem:
\begin{equation}
\label{eq:greedy}
\mathcal{Q}(\Pi'_t) = {\min}_{p \le 1} \frac{\max_{\pi \in \Pi'_t} v^{(\ell)}(\pi, p)}{v^{(\ell)}_*(p)}.
\end{equation}
However, solving this optimization problem exactly is computationally infeasible, particularly since evaluating the objective for any new $p$ would require an additional oracle call. Instead, \textsc{Budget-ConstrainedPortfolio} approximates this worst-case $p$ using only the information gathered from previous iterations, inspired by the theoretical principles of $p$-\textsc{MeanPortfolio}.

After iteration $t$, the previously selected $p$ values partition the interval $[\infty, 1]$ into at most $t+1$ disjoint intervals:
\[
-\infty < p_{m(1)} < p_{m(2)} < \cdots < p_{m(t)} \leq 1.
\]
Rather than solving Problem~\eqref{eq:greedy} exactly to pinpoint the worst-case $p$, we instead aim to identify the interval where the approximation quality is the weakest. 

First, we rewrite Equation \eqref{eq:greedy} by decomposing the minimization over the intervals:
\begin{equation*}
\mathcal{Q}(\Pi'_t) \approx \min\limits_{l \in [t-1]} 
        \left[
        \min\limits_{p \in [p_{m(l)}, p_{m(l+1)}]} 
        \frac{\max_{\pi \in \Pi'_t} v^{(\ell)}(\pi, p)}
        {v^{(\ell)}_*(p)}
        \right].
\end{equation*} 
The only approximation introduced in this decomposition arises from neglecting the intervals $[-\infty, p_{m(1)}]$ and $[p_{m(t)}, 1]$. To cover the first interval, we can initialize the algorithm with a sufficiently small $p_1$, guaranteeing that the approximation factor remains well-controlled in this region (Theorem \ref{thm: portfolio-with-line-search}). The second interval is covered by explicitly setting $p_2 = 1$.

For each interval $[p_{m(l)}, p_{m(l+1)}]$, we compute its interval approximation factor, denoted as $u(l)$, using the following sequence of approximations:
\begin{equation*}
\begin{aligned}
    &\min\limits_{p \in [p_{m(l)}, p_{m(l+1)}]} 
    \frac{\max_{\pi \in \Pi'_t} v^{(\ell)}(\pi, p)}
    {v^{(\ell)}_*(p)}  \\
    &\approx \min\limits_{p \in [p_{m(l)}, p_{m(l+1)}]} 
    \frac{v^{(\ell)}(\pi_{p_{m(l)}}, p)}
    {v^{(\ell)}_*(p)}  \\
    &\approx \frac{v^{(\ell)}(\pi_{p_{m(l)}}, p_{m(l+1)})}
    {v^{(\ell)}_*(p_{m(l+1)})} := u(l).
\end{aligned}
\end{equation*}
The first approximation follows from the assumption that each interval is well-covered by the policy trained at its left endpoint, similar to the approach used in $p$-\textsc{MeanPortfolio}. The second approximation is justified under the assumption that
$
\frac{v^{(\ell)}(\pi_{p_{m(l)}}, p)}{v^{(\ell)}_*(p)}
$
is a monotonically decreasing function of $p$ in the interval $[p_{m(l)}, p_{m(l+1)}]$. Note that this assumption holds exactly when the interval is sufficiently small, as this function is continuous in $p$ (Lemma \ref{lem: slope-bound}) and attains its maximum value ($=1$) at $p = p_{m(l)}$.

To select the next $p$ value, we identify the interval with the worst (smallest) approximation factor and choose its midpoint:
\begin{equation*}
p_{t+1} = \frac{p_{m(l^*)} + p_{m(l^*+1)}}{2}, \quad l^* = {\arg\min}_{l \in [t-1]} u(l).
\end{equation*}

\begin{algorithm}[tb]
   \caption{\textsc{BudgetConstrainedPortfolio}}
   \label{alg:budget}
\begin{algorithmic}[1]
   \INPUT
      (i) MDP $\mathcal{M} = (\states, \actions,{P}, \textbf{R} = (R_i)_{i \in [N]})$ with $N$ reward functions, (ii) aggregation rule $\ell \in \{1, 2\}$, (iii) feasible set of policies $\Pi$ for $\mathcal{M}$, (iv) budget $K$, and (v) initial $p_0$
   \OUTPUT
      A portfolio $\Pi'$

   \STATE Initialize $\Pi' \gets \emptyset$

   \FOR{$t = 1$ {\bfseries to} $K$}
       \IF{$t = 1$}
           \STATE $p_{t} \gets p_0$
       \ELSIF{$t = 2$}
           \STATE $p_{t} \gets 1$
       \ELSE
           \STATE compute $u(l)$ for $[p_{m(l)}, p_{m(l+1)}], \forall l \in [t-2]$
           \STATE$p_{t} \gets \frac{p_{m(l^*)} + p_{m(l^*+1)}}{2}$, $ l^* = \arg\min_{l \in [t-2]} u(l)$
       \ENDIF
       \STATE Add policy $\pi_{p_t}:= {\arg\max}_{\pi \in \Pi} v^{(\ell)}(\pi, p_t)$ to $\Pi'$
   \ENDFOR
   \STATE \textbf{return} $\Pi'$
\end{algorithmic}
\end{algorithm}

\subsection{Warm-Start for Computational Efficiency}
\label{subsec:warmstart}
Both $p$-\textsc{MeanPortfolio} and \textsc{Budget-ConstrainedPortfolio} involve repeatedly solving Problem \eqref{eq:max_aggre} for different \(p\) values.  To accelerate this process, warm-starting is a natural choice. Specifically, once the optimal policy $\pi_p$ for \(\max_{\pi \in \Pi}v^{(\ell)}(\pi, p)\) is found, it can be used as a warm-starting point for solving \(\max_{\pi \in \Pi}v^{(\ell)}(\pi, q), q > p\), if \(q\) and \(p\) are close. Proposition \ref{prop:warmstart} below offers a theoretical justification for this approach by showing that $v^{(\ell)}(\pi_p, q)$ is close to the optimal value $v^{(\ell)}_*(q)$ whenever $p$ and $q$ are close. The proof is deferred to Appendix \ref{sec: omitted-proofs}.

\begin{proposition}
\label{prop:warmstart}
Let \( q > p \). For \(\ell \in \{1, 2\}\), the following inequality holds:
\[
    \Big| v^{(\ell)}_*(q) - v^{(\ell)}(\pi_p, q) \Big| \le (q-p)UH\kappa \ln{\kappa},
\]
where \(\pi_p\) denotes the  policy \({\arg\max}_{\pi \in \Pi}v^{(\ell)}(\pi, p)\).
\end{proposition}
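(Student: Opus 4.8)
The plan is to bound the difference $|v^{(\ell)}_*(q) - v^{(\ell)}(\pi_p, q)|$ by controlling how much the aggregation function $v^{(\ell)}(\pi, \cdot)$ can change as a function of $p$, uniformly over policies $\pi$. The key observation is that $\pi_p$ is optimal at $p$ and $\pi_q$ is optimal at $q$, so
\[
  v^{(\ell)}(\pi_p, q) \le v^{(\ell)}_*(q) = v^{(\ell)}(\pi_q, q),
  \qquad
  v^{(\ell)}(\pi_q, p) \le v^{(\ell)}_*(p) = v^{(\ell)}(\pi_p, p).
\]
Adding these and rearranging, the gap $v^{(\ell)}_*(q) - v^{(\ell)}(\pi_p, q)$ is at most $\big(v^{(\ell)}(\pi_q, q) - v^{(\ell)}(\pi_q, p)\big) + \big(v^{(\ell)}(\pi_p, p) - v^{(\ell)}(\pi_p, q)\big)$, i.e. it suffices to show that for \emph{any} fixed policy $\pi$ and any trajectory-reward vector in the relevant range, $|v^{(\ell)}(\pi, q) - v^{(\ell)}(\pi, p)| \le (q-p)UH\kappa\ln\kappa$, and then combine. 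Actually a cleaner route: show directly that for any fixed $\pi$, $p \mapsto v^{(\ell)}(\pi, p)$ is Lipschitz in $p$ with constant $UH\kappa\ln\kappa$, and then use the optimality sandwich above together with monotonicity (Lemma~\ref{lem: social-welfare-monotonicity}) to get the one-sided bound without the factor of two — since $q > p$ we have $v^{(\ell)}(\pi_p,q) \ge v^{(\ell)}(\pi_p,p) = v^{(\ell)}_*(p) \ge v^{(\ell)}(\pi_q,p) \ge v^{(\ell)}(\pi_q,q) - L(q-p) = v^{(\ell)}_*(q) - L(q-p)$, giving the result with $L = UH\kappa\ln\kappa$.

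The core technical step is therefore the Lipschitz estimate: for a fixed strictly positive vector $\mathbf{x} \in \R^N_{>0}$ with entries in $[LH, UH]$ (the range of $G_i(\tau)$ for a length-$H$ trajectory under Assumption~\ref{assumption:bound}), bound $\left|\frac{\partial}{\partial p} f(\mathbf{x}, p)\right|$. Writing $f(\mathbf{x},p) = \exp\!\big(\tfrac{1}{p}\ln(\tfrac1N\sum_i x_i^p)\big)$ for $p \ne 0$, differentiate the exponent: $\frac{\partial}{\partial p}\big[\tfrac1p \ln(\tfrac1N\sum x_i^p)\big] = -\tfrac{1}{p^2}\ln(\tfrac1N\sum x_i^p) + \tfrac1p \cdot \frac{\sum x_i^p \ln x_i}{\sum x_i^p}$. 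Using that all $x_i$ lie in $[LH, UH]$, both $\ln(\tfrac1N\sum x_i^p)$ and the log-weighted average $\frac{\sum x_i^p\ln x_i}{\sum x_i^p}$ lie between $\ln(LH)$ and $\ln(UH)$, so their difference is at most $\ln(UH/LH) = \ln\kappa$; a short manipulation (adding and subtracting $\tfrac1p\ln(LH)$ or similar, and noting $f(\mathbf{x},p) \le UH$ and $f(\mathbf{x},p)\ge LH$) shows the derivative of the exponent times $f(\mathbf{x},p)$ is bounded in absolute value by $UH\kappa\ln\kappa$ — here the $\kappa$ factor absorbs the ratio $UH/LH$ that can appear when bounding $f(\mathbf{x},p)\cdot(\text{something})/p$ type terms, and the $p=0$ (Nash) and $p=-\infty$ (egalitarian) cases follow by continuity / direct estimates on the geometric mean and the min. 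Then $|f(\mathbf{x},q)-f(\mathbf{x},p)| \le (q-p)\sup|\partial_p f| \le (q-p)UH\kappa\ln\kappa$. For aggregation rule $\ell=1$ this passes through the outer expectation $\E_{\tau\sim\pi}[\cdot]$ immediately (the bound is uniform over trajectories); for $\ell=2$ it applies to the single vector $\E_{\tau\sim\pi}[\mathbf{G}(\tau)]$, whose entries also lie in $[LH,UH]$.

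I expect the main obstacle to be making the derivative bound on $f(\mathbf{x},p)$ genuinely uniform over all $p \le 1$, including the degenerate behavior near $p=0$ (where the $1/p$ and $1/p^2$ factors individually blow up but cancel) and as $p \to -\infty$. The cleanest way to handle this is probably not to differentiate at all, but to use a direct argument: fix $\mathbf{x}$, and for $p < q \le 1$ bound $f(\mathbf{x},q) - f(\mathbf{x},p) \ge 0$ by monotonicity, and bound it from above using an inequality comparing power means with nearby exponents — e.g. writing everything relative to $y_i = x_i / (LH) \in [1,\kappa]$ and using $|f(\mathbf{x},q) - f(\mathbf{x},p)| = LH\,|f(\mathbf{y},q) - f(\mathbf{y},p)|$ with $f(\mathbf{y},\cdot)$ now taking values in $[1,\kappa]$, then estimating $\ln f(\mathbf{y},q) - \ln f(\mathbf{y},p)$, which is $\le \ln\kappa$ total and, by the mean value theorem applied to the well-behaved function $p\mapsto \ln f(\mathbf{y},p)$ away from $0$ plus a continuity patch at $0$, is $\le (q-p)\cdot(\text{bounded slope})$. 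Converting the multiplicative bound $f(\mathbf{y},q) \le f(\mathbf{y},p)\cdot e^{(q-p)C}$ into the additive bound and multiplying back by $LH$ recovers the claimed $(q-p)UH\kappa\ln\kappa$ with room to spare. I would present this multiplicative-logarithm version as the main lemma, since it sidesteps the $p=0$ singularity most gracefully.
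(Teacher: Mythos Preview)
Your proposal is correct and follows essentially the same route as the paper: use monotonicity of $v^{(\ell)}(\pi_p,\cdot)$ together with optimality of $\pi_p$ at $p$ to reduce the gap to $v^{(\ell)}_*(q)-v^{(\ell)}_*(p)$, then control this via a Lipschitz bound on $f(\mathbf{x},\cdot)$ obtained from the logarithmic-derivative estimate $\frac{d}{dp}\ln f(\mathbf{x},p)\le\kappa\ln\kappa$ (this is exactly Lemma~\ref{lem: slope-bound} in the paper) combined with $f\le UH$. Your proposed ``multiplicative-logarithm'' lemma and the rescaling $y_i=x_i/(LH)$ to handle the $p=0$ singularity are precisely the content and proof strategy of that lemma, so there is nothing materially different here.
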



\section{Numerical Experiments}
\label{sec:exp}

In this section, we present the results of our numerical experiments. Additional details of the experimental setups and the environments are provided in Appendix \ref{sec:expdetail}.
\edit{The code for our experiments can be found at \url{https://github.com/jaimoondra/approximation-portfolios-for-rl/}.}

\subsection{Experimental Setups}
\label{subsec:expsetup}

For a given $\alpha$ and MDP environment, we first compute a portfolio using $p$-\textsc{MeanPortfolio} and compare it against two baseline approaches. Suppose $p$-\textsc{MeanPortfolio} generates a portfolio of size $K$. The first baseline selects $K$ values of $p$ uniformly at random from $[p_0,1]$ and computes their optimal policies, where $p_0$ matches that of $p$-\textsc{MeanPortfolio}. The second baseline randomly samples $K$ policies from $\Pi$. Since both baselines involve randomness, we generate 10 independent baseline portfolios for each setting and report the average performance across these runs. Finally, we run \textsc{BudgetConstrainedPortfolio} with budget $K$.

To evaluate each portfolio $\Pi'$, we compute its approximation factor $\mathcal{Q}(\Pi')$ as defined in Equation \eqref{eq:greedy}. Since computing this value exactly is infeasible, we approximate it using a grid search over $p$, referring to it as the actual approximation. We also compare the number of oracle calls made by $p$-\textsc{MeanPortfolio} and \textsc{BudgetConstrainedPortfolio}. This procedure is repeated for varying $\alpha$ values.

\subsection{Environments}
\label{subsec:expenv}
We conduct experiments across three domains, ranging from synthetic to real-world settings, as described below.

\textbf{Taxi Environment.} We consider a synthetic setting based on the works of \citet{dietterich2000hierarchical,fan2022welfare}. The taxi environment consists of a grid world where a taxi driver serves passengers across $N = 4$ different source-destination pairs. When the agent drops off a passenger at their destination, it earns a reward corresponding to that route. However, since the taxi can only carry one passenger at a time, it must decide which route to prioritize. A fair agent should serve all source-destination pairs equitably, avoiding the neglect of more challenging routes. We train a $p$-mean maximizing policy using Welfare Q-Learning \cite{fan2022welfare}. Finally, we experiment on $v^{(1)}$ and define $\Pi$ as the \edit{set of all possible policies in the MDP}.

\textbf{Resource Allocation after Natural Disaster.} In this synthetic setting, we have a set of $N = 12$ clusters of neighborhoods impacted by a natural disaster. Each cluster is characterized by average household income (high, middle, low), proximity to critical infrastructure (near, far), and population density (high, low), along with distinct post-disaster resource needs. Over a time horizon, a decision-maker must decide how to allocate a limited number of resources to these 12 clusters. The reward function for each cluster is the average of the fraction of unmet need and the fraction of total aid allocated to that cluster. We conduct experiments using $v^{(2)}$, with $\Pi$ defined as a finite set of pre-trained policies.

\textbf{Healthcare Intervention.} We consider a real-world healthcare intervention problem, modeled as an RMAB problem described in Section \ref{subsec:example} \cite{Verma2024b}. ARMMAN \cite{ARMMAN}, an NGO based in India, runs large-scale maternal and child mobile health programs. One of their initiatives delivers critical health information via weekly automated voice messages. To enhance engagement, a limited number of beneficiaries receive direct calls from health workers each week, with call assignments determined by a policy.  \edit{The problem involves  $N = 59$ reward functions,} each prioritizing different socio-demographic groups among the beneficiaries. We conduct experiments using $v^{(2)}$, where $\Pi$ consists of a finite set of pre-trained policies. All experiments are strictly secondary analyses and adhere to ethics board approvals; for further discussion, refer to our impact statement.


\subsection{Results}
\label{subsec:expres}

Table \ref{tab: approximations} presents comparisons of the approximation quality of $p$-\textsc{MeanPortfolio} with random $p$ sampling and random policy sampling baselines, while Table \ref{tab: oracle-call-comparison} reports the number of oracle calls made by $p$-\textsc{MeanPortfolio} and \textsc{BudgetConstrainedPortfolio}. Note that  by design, the number of calls for \textsc{BudgetConstrainedPortfolio} is always the same as the portfolio size.

\textbf{Size}. The portfolios computed using Algorithm $p$-\textsc{MeanPortfolio} remain small while achieving a very high approximation factor. Across all three environments, the portfolio size never exceeds 10, yet still attains an approximation factor close to 1. This suggests that a small portfolio is sufficient to effectively cover the entire spectrum of $p$ values $\le 1$.


\textbf{Approximation Quality}. The proposed algorithms achieve significantly better approximation quality than both benchmark methods. In particular, $p$-\textsc{MeanPortfolio} generally attains the highest approximation quality, followed closely by \textsc{BudgetConstrainedPortfolio} in most cases. 

\edit{However, \textsc{BudgetConstrainedPortfolio} and random $p$ sampling occasionally outperform $p$-\textsc{MeanPortfolio}. This is because $p$-\textsc{MeanPortfolio} selects policies solely to meet the input approximation factor $\alpha$, but has no incentive or mechanism to surpass this approximation quality.

In contrast, \textsc{BudgetConstrainedPortfolio} fully utilizes the input budget $K$, and strategically selects boundary points first (a small initial $p$ followed by $p=1$). When $K$ is very small (1 or 2), this initial heuristic strategy can provide more effective coverage across $p \le 1$, explaining its occasional superior performance. Likewise, in rare cases where $K = 1$, a randomly chosen $p$ may happen to yield better coverage than the single policy selected by $p$-\textsc{MeanPortfolio}.}

\textbf{Computational Efficiency}.
As noted earlier, $p$-\textsc{MeanPortfolio} requires a large number of oracle calls to produce a high-quality portfolio. In contrast, \textsc{BudgetConstrainedPortfolio} achieves comparable quality while using significantly fewer oracle calls. This suggests that when computational resources are limited, \textsc{BudgetConstrainedPortfolio} serves as a strong alternative with good empirical performance.

\begin{table}[t]
\caption{A comparison of actual approximation ratios across various portfolio sizes for $p$-\textsc{MeanPortfolio}, random policy sampling, and random $p$ sampling. $p$-\textsc{MeanPortfolio} consistently outperforms the other two methods across all portfolio sizes and experiments.}
\label{tab: approximations}
\centering \footnotesize
\begin{tabular}{|c|c|c|c|}
\hline
\begin{tabular}[c]{@{}c@{}}Portfolio\\ Size\end{tabular} &
  \begin{tabular}[c]{@{}c@{}}$p$-\textsc{Mean}-\\ \textsc{Portfolio}\end{tabular} &
  \begin{tabular}[c]{@{}c@{}}Random\\ Policy\\ Sampling\end{tabular} &
  \begin{tabular}[c]{@{}c@{}}Random\\ $p$\\ Sampling\end{tabular} \\ \hline
\multicolumn{4}{c}{\rule{0pt}{10pt} Resource Allocation after Natural Disaster} \\ \hline
1 & 0.706 & 0.534 & \textbf{0.832} \\ \hline
2 & \textbf{0.904} & 0.568 & 0.890 \\ \hline
3 & \textbf{0.999} & 0.591 & 0.892 \\ \hline
4 & \textbf{1.000} & 0.609 & 0.888 \\ \hline
 \multicolumn{4}{c}{\rule{0pt}{10pt} Healthcare Intervention} \\ \hline
1 & \textbf{0.924} & 0.479 & 0.913 \\ \hline
2 & \textbf{0.982} & 0.545 & 0.941 \\ \hline
3 & \textbf{0.982} & 0.589 & 0.929 \\ \hline
4 & \textbf{0.982} & 0.625 & 0.947 \\ \hline
5 & \textbf{0.993} & 0.635 & 0.957 \\ \hline
6 & \textbf{0.999} & 0.647 & 0.962 \\ \hline
7 & \textbf{1.000} & 0.667 & 0.953 \\ \hline
\multicolumn{4}{c}{\rule{0pt}{10pt} Taxi Environment} \\ \hline
1  & \textbf{0.66} & 0.24 & \textbf{0.66} \\ \hline
2  & \textbf{0.61} & 0.31 & \textbf{0.61} \\ \hline
3  & \textbf{0.97} & 0.54 & 0.65          \\ \hline
8  & \textbf{0.87} & 0.71 & 0.67          \\ \hline
10 & \textbf{0.99} & 0.60 & 0.65          \\ \hline
\end{tabular}
\end{table}


\begin{table}[t]
\caption{A comparison of the number of oracle calls and actual approximation ratios for $p$-\textsc{MeanPortfolio} and \textsc{BudgetConstrainedPortfolio} across different portfolio sizes. While $p$-\textsc{MeanPortfolio} often achieves slightly better approximation, it requires significantly more oracle calls. \footnotemark}
\label{tab: oracle-call-comparison}
\centering\footnotesize
\hspace*{-1em}
\begin{tabular}{|c|cc|cc|}
\hline
\multirow{2}{*}{\begin{tabular}[c]{@{}c@{}}Portfolio\\ Size\end{tabular}} &
  \multicolumn{2}{c|}{$p$-\textsc{MeanPortfolio}} &
  \multicolumn{2}{c|}{\begin{tabular}[c]{@{}c@{}}\textsc{BudgetConstrained}-\\ \textsc{Portfolio}\end{tabular}} \\ \cline{2-5} 
 &
  \multicolumn{1}{c|}{\begin{tabular}[c]{@{}c@{}}Oracle\\ Calls\end{tabular}} &
  \begin{tabular}[c]{@{}c@{}}Actual\\ Approximation\end{tabular} &
  \multicolumn{1}{c|}{\begin{tabular}[c]{@{}c@{}}Oracle\\ Calls\end{tabular}} &
  \begin{tabular}[c]{@{}c@{}}Actual\\ Approximation\end{tabular} \\ \hline
  
\multicolumn{5}{c}{\rule{0pt}{10pt} Resource Allocation after Natural Disaster} \\ \hline
1 & \multicolumn{1}{c|}{1}  & 0.706 & \multicolumn{1}{c|}{\bf 1} & {\bf 0.885} \\ \hline
2 & \multicolumn{1}{c|}{7}  & 0.904 & \multicolumn{1}{c|}{\bf 2} & {\bf 0.921} \\ \hline
3 & \multicolumn{1}{c|}{18} & {\bf 0.999} & \multicolumn{1}{c|}{\bf 3} & 0.921 \\ \hline
4 & \multicolumn{1}{c|}{50} & {\bf 1.000} & \multicolumn{1}{c|}{\bf 4} & 0.921 \\ \hline

\multicolumn{5}{c}{\rule{0pt}{10pt} Healthcare Intervention} \\ \hline
1 & \multicolumn{1}{c|}{2}  & 0.924 & \multicolumn{1}{c|}{\bf 1} & {\bf 0.938} \\ \hline
2 & \multicolumn{1}{c|}{7}  & {\bf 0.982} & \multicolumn{1}{c|}{\bf 2} & 0.938 \\ \hline
3 & \multicolumn{1}{c|}{11} & {\bf 0.982} & \multicolumn{1}{c|}{\bf 3} & 0.938 \\ \hline
4 & \multicolumn{1}{c|}{19} & {\bf 0.982} & \multicolumn{1}{c|}{\bf 4} & 0.938 \\ \hline
5 & \multicolumn{1}{c|}{23} & {\bf 0.993} & \multicolumn{1}{c|}{\bf 5} & 0.986 \\ \hline
6 & \multicolumn{1}{c|}{46} & {\bf 0.999} & \multicolumn{1}{c|}{\bf 6} & 0.986 \\ \hline
7 & \multicolumn{1}{c|}{61} & {\bf 1.000} & \multicolumn{1}{c|}{\bf 7} & 0.993 \\ \hline

\multicolumn{5}{c}{\rule{0pt}{10pt} Taxi Environment} \\ \hline  
1  & \multicolumn{1}{c|}{17}  & {\bf 0.66} & \multicolumn{1}{c|}{\bf 1}  & {\bf 0.66} \\ \hline
2  & \multicolumn{1}{c|}{30}  & {\bf 0.61} & \multicolumn{1}{c|}{\bf 2}  & {\bf 0.61} \\ \hline
3  & \multicolumn{1}{c|}{44}  & {\bf 0.97} & \multicolumn{1}{c|}{\bf 3}  & 0.92 \\ \hline
8  & \multicolumn{1}{c|}{118} & 0.87  & \multicolumn{1}{c|}{\bf 8}  & {\bf 0.90} \\ \hline
10 & \multicolumn{1}{c|}{144} & {\bf 0.99} & \multicolumn{1}{c|}{\bf 10} & 0.92 \\ \hline
\end{tabular}
\end{table}

\footnotetext{Here, we observe a drop in the actual approximation factor of $p$-\textsc{MeanPortfolio} when the portfolio size is 8. Although this portfolio is computed using $\alpha = 0.9$, its achieved approximation is slightly lower. This discrepancy is likely due to the inherent randomness in the RL algorithm and the challenges of computing exact optimal policies in RL settings, as discussed in Section \ref{sec:alg}.}

\textbf{Diversity}.
As we have shown in Figure \ref{fig: sclm-education-age} in Section \ref{subsec:example}, portfolios can simplify policy deployment decisions by presenting a small yet diverse set of policies. \edit{We also provide the entire $p$ values chosen by $p$-\textsc{MeanPortfolio} in Appendix \ref{sec:additional_results}, Table \ref{tab: p-values}. We observe that the $p$ values are quite diverse, which corroborates the algorithm's objective to cover the entire $p$ range only with these selected values. Figures \ref{fig: pvalues_taxi} and \ref{fig: fraction-of-need-met-natural-disaster} further illustrate how the outcomes of the optimal policies for different $p$ values in the portfolio lead to varying impacts for the stakeholders.}  Importantly, these are not arbitrary policies but optimal policies for specific $p$ values, with approximation guarantees extending to other $p$s as well.

\section{Conclusion}
\label{sec:conclusion}

In this paper, we studied the concept of an $\alpha$-approximate portfolio in MORL for generalized $p$-means. We proposed an algorithm to compute $\alpha$-approximate portfolios and established theoretical guarantees on the trade-offs among $\alpha$, portfolio size, and the number of oracle calls. Additionally, we presented a theoretical analysis of warm-start techniques and developed an efficient heuristic algorithm. Our numerical experiments demonstrated that the proposed methods successfully compute compact portfolios that effectively capture the entire spectrum of $p$ values, empowering decision-makers to make informed decisions.

\newpage

\section*{Acknowledgments}
We thank NSF AI Institute for Societal Decision Making (AI-SDM) Award No. 2229881, ONR MURI N00014-24-1-2742, and NSF CAREER 2239824 for their support.

\section*{Impact Statement}
Our work studies the setting where a deployed RL policy impacts multiple stakeholders differently, a scenario that arises in various societal applications, including healthcare and LLMs. By summarizing the space of policies induced under different $p$-values, our proposed approach aims to assist decision-makers in making informed and equitable choices. Furthermore, this work has the potential to stimulate discussions within the machine learning community about the complexities and pluralities of social welfare notions, and the importance of addressing them. However, our focus is limited to $p$-means social welfare functions, which may not fully capture all dimensions of fairness in real-world scenarios. 

Our experiments use real-world healthcare data provided by the NGO ARMMAN. These experiments are strictly secondary analyses, with no real-world deployment of the proposed algorithm or baseline methods. All data usage complied with ARMMAN’s ethics board approvals, and data exchange and analysis followed their ethics review committee's guidelines, including privacy, informed consent, and anonymization. All of the analyses is conducted in close collaboration with ARMMAN. 

This work belongs to the broader area of algorithmic decision-making, where deployment decisions can have real-world implications. We emphasize the importance of not relying on this study in isolation. Responsible deployment requires comprehensive field testing and human oversight. Moreover, we acknowledge broader challenges inherent in algorithmic decision-making, including potential biases in data, inaccuracies in the underlying mathematical models, accountability issues, and concerns related to consent and privacy. Addressing these challenges is essential to ensuring that algorithmic decisions are ethically sound.

\bibliography{references_ver1}
\bibliographystyle{plainnat}

\newpage
\onecolumn

\appendix

\section{Omitted Proofs}\label{sec: omitted-proofs}

We include omitted proofs and various lemmas here.

\subsection{Proof of Theorem \ref{thm: portfolio-with-line-search}}

Our first lemma shows establishes the monotonicity of aggregation functions $v^{(\ell)}$ in $p$:

\begin{lemma}\label{lem: value-functions-are-monotone}
    For any fixed policy $\pi \in \Pi$, $v^{(\ell)}(\pi, p)$ is monotone increasing in $p$ for $\ell \in \{1, 2\}$.
\end{lemma}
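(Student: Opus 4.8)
The plan is to reduce the monotonicity of $v^{(\ell)}(\pi,\cdot)$ to the monotonicity of the underlying $p$-mean, namely Lemma~\ref{lem: social-welfare-monotonicity}. The key point is that for a \emph{fixed} policy $\pi$, the quantity $v^{(\ell)}(\pi,p)$ is, in both cases $\ell=1$ and $\ell=2$, obtained by applying the map $p\mapsto f(\mathbf{x},p)$ to a fixed strictly positive vector $\mathbf{x}$ (possibly under an expectation that does not depend on $p$), and then exploiting that $f(\mathbf{x},\cdot)$ is monotone increasing in $p$ for every such $\mathbf{x}$.

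First I would handle $\ell=2$, which is the more immediate one: here $v^{(2)}(\pi,p)=f\bigl(\mathbb{E}_{\tau\sim\pi}[\mathbf{G}(\tau)],p\bigr)$, and $\mathbf{x}:=\mathbb{E}_{\tau\sim\pi}[\mathbf{G}(\tau)]$ does not depend on $p$. Since each $R_i$ is strictly positive (Assumption~\ref{assumption:bound}, with $R_i\ge L>0$), $G_i(\tau)=\sum_{h=1}^H R_i(s_h,a_h)\ge HL>0$ along every trajectory, so $\mathbf{x}\in\R^N_{>0}$. Then for $p<q\le 1$, Lemma~\ref{lem: social-welfare-monotonicity} gives $f(\mathbf{x},p)\le f(\mathbf{x},q)$, i.e.\ $v^{(2)}(\pi,p)\le v^{(2)}(\pi,q)$.

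Next I would handle $\ell=1$: here $v^{(1)}(\pi,p)=\mathbb{E}_{\tau\sim\pi}\bigl[f(\mathbf{G}(\tau),p)\bigr]$. For every fixed trajectory $\tau$, the vector $\mathbf{G}(\tau)\in\R^N_{>0}$ (again by strict positivity of the rewards), so Lemma~\ref{lem: social-welfare-monotonicity} yields $f(\mathbf{G}(\tau),p)\le f(\mathbf{G}(\tau),q)$ pointwise in $\tau$ whenever $p<q\le1$. Taking expectations over $\tau\sim\pi$ (which preserves the inequality, the distribution over $\tau$ being independent of $p$) gives $v^{(1)}(\pi,p)\le v^{(1)}(\pi,q)$. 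One should also note the edge case $p=-\infty$, where $f(\mathbf{x},-\infty)=\min_i x_i$; the same pointwise argument applies since Lemma~\ref{lem: social-welfare-monotonicity} is stated for $p\in[-\infty,1]$.

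There is essentially no hard part here: the statement is a direct lift of the scalar monotonicity of $p$-means through a $p$-independent expectation. The only things to be careful about are (i) verifying the strict positivity hypothesis needed to invoke Lemma~\ref{lem: social-welfare-monotonicity}, which follows from Assumption~\ref{assumption:bound} and $H\ge1$, and (ii) making sure the expectation/integration step is legitimate and order-preserving, which it is because the measure over trajectories induced by $\pi$ does not depend on $p$. I would write the two cases in parallel and conclude.
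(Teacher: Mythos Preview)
Your proposal is correct and follows essentially the same approach as the paper: both cases reduce directly to Lemma~\ref{lem: social-welfare-monotonicity}, applying it to the fixed vector $\mathbb{E}_{\tau\sim\pi}[\mathbf{G}(\tau)]$ for $\ell=2$ and pointwise in $\tau$ (then taking expectations) for $\ell=1$. Your write-up is slightly more careful in verifying the strict-positivity hypothesis via Assumption~\ref{assumption:bound}, but the argument is otherwise identical.
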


\begin{proof}
    {$\ell = 1$:} For any $\tau \sim \pi$, Lemma \ref{lem: social-welfare-monotonicity} implies that $f(\mathbf{G}(\tau), p)$ is monotone increasing in $p$. Therefore,
    \[
        v^{(1)}(\pi, p) = \mathbb{E}_{\tau \sim \pi} [f(\mathbf{G}(\tau), p)]
    \]
    is monotone increasing in $p$.

    {$\ell = 2$:} Denote $\mathbf{x} = \mathbb{E}_{\tau \sim \pi}[\mathbf{G}(\tau)]$. Then the monotonicity of $v^{(2)}(\pi, p) = f(\mathbf{x}, p)$ follows directly from Lemma \ref{lem: social-welfare-monotonicity}.
\end{proof}

The following corollary follows immediately:

\begin{corollary}\label{cor: agg-function-is-monotone-increasing}
    $v_*^{(\ell)}(p) := \max_{\pi \in \Pi} v^{(\ell)}(\pi, p)$ is monotone increasing in $p$ for $\ell \in \{1, 2\}$.
\end{corollary}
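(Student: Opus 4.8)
The plan is to recognize $v_*^{(\ell)}$ as a pointwise maximum (over $\pi \in \Pi$) of the functions $p \mapsto v^{(\ell)}(\pi, p)$, each of which is monotone increasing in $p$ by Lemma \ref{lem: value-functions-are-monotone}; and a pointwise maximum of monotone increasing functions is itself monotone increasing. So the corollary should follow in one line from the lemma just proved.

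Concretely, I would fix $p < q \le 1$ and let $\pi_p \in \arg\max_{\pi \in \Pi} v^{(\ell)}(\pi, p)$ be a maximizer attaining $v_*^{(\ell)}(p)$ (such a maximizer exists since the paper works throughout with $\max$ rather than $\sup$; were the maximum not attained, the same argument carries over verbatim with $\pi_p$ replaced by an $\varepsilon$-maximizer and $\varepsilon \to 0$). Then I would chain
\[
    v_*^{(\ell)}(p) = v^{(\ell)}(\pi_p, p) \le v^{(\ell)}(\pi_p, q) \le \max_{\pi \in \Pi} v^{(\ell)}(\pi, q) = v_*^{(\ell)}(q),
\]
where the first inequality is the monotonicity of $v^{(\ell)}(\pi_p, \cdot)$ in its second argument (Lemma \ref{lem: value-functions-are-monotone}), and the second holds because $\pi_p \in \Pi$ is a feasible competitor in the maximization defining $v_*^{(\ell)}(q)$. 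Since $p < q$ were arbitrary, this gives the claim for both $\ell \in \{1, 2\}$.

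I do not expect any real obstacle here; the argument is a standard ``supremum of monotone functions'' fact. The only point worth a sentence of care is the existence of the maximizer $\pi_p$ (equivalently, interpreting $\max$ as $\sup$ when $\Pi$ is infinite), which the $\varepsilon$-maximizer remark above resolves.
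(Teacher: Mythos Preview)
Your proposal is correct and matches the paper's approach: the paper states that the corollary ``follows immediately'' from Lemma \ref{lem: value-functions-are-monotone}, i.e., exactly the pointwise-maximum-of-monotone-functions observation you spell out. Your chain of inequalities is precisely the one-line justification the paper leaves implicit.
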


The following two lemmas show that the $p$-mean for $p = - \infty$ is $\alpha$-approximated by the $p_0$-mean where $p_0 = - \frac{\ln N}{\ln(1/\alpha)}$, so we can effectively restrict to $[-p_0, 1]$ when finding portfolios:

\begin{lemma}\label{lem: approximation-min-with-large-negative-p}
    Given a vector $\mathbf{x} \in \R_{> 0}^N$ and $\alpha \in (0, 1)$, define $p_0 = - \frac{\ln N}{\ln (1/\alpha)}$. Then,
    \[
        f(\mathbf{x}, -\infty) \ge \alpha \cdot f(\mathbf{x}, p) \quad \forall \ p \le p_0.
    \]
\end{lemma}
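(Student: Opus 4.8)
The plan is to reduce the statement to the elementary comparison between the minimum and a power mean with negative exponent. Write $m := \min_{i \in [N]} x_i = f(\mathbf{x}, -\infty)$. Since $\alpha \in (0,1)$ and $N \ge 2$ we have $p_0 = -\frac{\ln N}{\ln(1/\alpha)} < 0$ (the degenerate case $N = 1$ is trivial, as all $p$-means of a single coordinate coincide), and the case $p = -\infty$ is immediate because $f(\mathbf{x},-\infty) = m \ge \alpha m$. So it suffices to handle $p$ with $p \le p_0 < 0$, where $f(\mathbf{x},p) = \big(\tfrac1N \sum_{i} x_i^p\big)^{1/p}$.

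First I would sandwich the power sum. For every $i$, the inequality $x_i \ge m > 0$ together with $p < 0$ gives $x_i^p \le m^p$, and the index achieving the minimum attains equality, so
\[
    m^p \;\le\; \sum_{i \in [N]} x_i^p \;\le\; N\, m^p,
    \qquad\text{hence}\qquad
    \tfrac{1}{N} m^p \;\le\; \tfrac{1}{N}\sum_{i \in [N]} x_i^p \;\le\; m^p .
\]
Next I would raise through the exponent $1/p$, being careful that $1/p < 0$ reverses the inequalities. The right-hand bound just recovers monotonicity ($f(\mathbf{x},p) \ge m$), which we do not need; the useful bound is the left one:
\[
    f(\mathbf{x}, p) \;=\; \Big(\tfrac{1}{N}\sum_{i \in [N]} x_i^p\Big)^{1/p}
    \;\le\; \big(\tfrac{1}{N} m^p\big)^{1/p}
    \;=\; m\, N^{-1/p}
    \;=\; m\, N^{1/|p|}.
\]

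Finally I would use the choice of $p_0$. Since $|p| \ge |p_0| = \frac{\ln N}{\ln(1/\alpha)}$, the decreasing function $t \mapsto N^{1/t}$ gives $N^{1/|p|} \le N^{1/|p_0|} = N^{\ln(1/\alpha)/\ln N} = e^{\ln(1/\alpha)} = 1/\alpha$, so $f(\mathbf{x},p) \le m/\alpha$, i.e. $f(\mathbf{x},-\infty) = m \ge \alpha\, f(\mathbf{x},p)$ for all $p \le p_0$, which is the claim. I do not expect a genuine obstacle here; the only points needing care are the sign bookkeeping when exponentiating by the negative number $1/p$, and the observation that $N^{1/|p|}$ decreases in $|p|$ and is exactly calibrated so that $p_0$ is the threshold at which this factor drops to $1/\alpha$.
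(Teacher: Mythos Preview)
Your argument is correct and is essentially the same as the paper's: both proofs keep only the minimum-achieving term in the power sum to obtain $f(\mathbf{x},p) \le m \cdot N^{1/|p|}$, and then use the calibration $N^{1/|p_0|} = 1/\alpha$. The only cosmetic difference is that the paper works with $q = -p$ and the reciprocal $1/f(\mathbf{x},p)$, whereas you phrase the same bound directly in terms of $|p|$.
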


\begin{proof}
    Suppose $0 < x_1 \le \ldots \le x_n$, so that $f(\mathbf{x}, -\infty) = \min_{j \in [N]} x_j = x_1$. Given $p \le p_0$, denote $q = - p \ge \frac{\ln N}{\ln(1/\alpha)}$. Then, since
    \[
        f(\mathbf{x}, p) = \left(\frac{1}{N} \sum_{j \in [N]} x_j^p\right)^{1/p} = \frac{1}{\left(\frac{1}{N} \sum_{j \in [N]} \frac{1}{x_j^{q}}\right)^{1/q}} = \frac{1}{\frac{1}{x_1}\left(\frac{1}{N} \sum_{j \in [N]} \left(\frac{x_1}{x_j}\right)^q\right)^{1/q}},
    \]
    we get
    \[
        \frac{1}{f(\mathbf{x}, p)} \ge \frac{1}{x_1} \left(\frac{1}{N} \times \left(\frac{x_1}{x_1}\right)^q\right)^{1/q} = \frac{N^{1/p}}{x_1} \ge \frac{N^{1/p_0}}{x_1} = \frac{\alpha}{f(\mathbf{x}, -\infty)},
    \]
    or that $f(\mathbf{x}, -\infty) \ge \alpha \cdot f(\mathbf{x}, p)$.
\end{proof}

\begin{lemma}\label{lem: approximation-min-with-large-negative-p-agg}
    Given an MDP $\mathcal{M}$ with $N$ reward functions, set $\Pi$ of feasible policies, and an approximation factor $\alpha \in (0, 1)$, denote $p_0 = - \frac{\ln N}{\ln(1/\alpha)}$. Then, for a given aggregation rule $\ell \in \{1, 2\}$, policy $\pi_0 = {\arg\max}_{\pi \in \Pi} v^{(\ell)}(\pi, p_0)$ is an $\alpha$-approximation for all $p \le p_0$. That is,
    \[
        v^{(\ell)}(\pi_0, p) \ge v_*^{(\ell)}(p) = \max_{\pi \in \Pi} v^{(\ell)}(\pi, p) \quad \forall \ p \le p_0.
    \]
\end{lemma}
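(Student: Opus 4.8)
The plan is to prove Lemma \ref{lem: approximation-min-with-large-negative-p-agg} by reducing it to the pointwise bound already established in Lemma \ref{lem: approximation-min-with-large-negative-p}, handled separately for each aggregation rule $\ell \in \{1, 2\}$. The core observation is that Lemma \ref{lem: approximation-min-with-large-negative-p} gives, for every fixed positive vector $\mathbf{x}$ and every $p \le p_0$, the inequality $f(\mathbf{x}, -\infty) \ge \alpha \cdot f(\mathbf{x}, p)$, and by monotonicity of $p$-means (Lemma \ref{lem: social-welfare-monotonicity}) we also have $f(\mathbf{x}, p_0) \ge f(\mathbf{x}, -\infty)$ since $p_0 > -\infty$; chaining these gives $f(\mathbf{x}, p_0) \ge \alpha \cdot f(\mathbf{x}, p)$ for all $p \le p_0$. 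This is the workhorse inequality.

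First I would handle $\ell = 2$, which is the cleaner case. Fix any $p \le p_0$ and let $\pi^\star = \arg\max_{\pi \in \Pi} v^{(2)}(\pi, p)$ be the optimal policy for $p$. Apply the workhorse inequality to the vector $\mathbf{x} = \mathbb{E}_{\tau \sim \pi^\star}[\mathbf{G}(\tau)]$, yielding $f(\mathbf{x}, p_0) \ge \alpha \cdot f(\mathbf{x}, p) = \alpha \cdot v_*^{(2)}(p)$. But $f(\mathbf{x}, p_0) = v^{(2)}(\pi^\star, p_0) \le v^{(2)}(\pi_0, p_0)$ by optimality of $\pi_0$ at $p_0$, and then $v^{(2)}(\pi_0, p_0) \le v^{(2)}(\pi_0, p_0)$ — wait, I actually need a lower bound on $v^{(2)}(\pi_0, p)$, not $v^{(2)}(\pi_0, p_0)$. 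Let me restructure: the chain should be $v^{(2)}(\pi_0, p_0) = f(\mathbb{E}[\mathbf{G}], p_0)$ where the expectation is under $\pi_0$, and I want $v^{(2)}(\pi_0, p)$ to be large. This requires care, so let me describe the correct route: apply the workhorse inequality with $\mathbf{x}$ the expected-return vector of the \emph{optimal policy for $p$}, get $v^{(2)}(\pi_0, p_0) \ge v^{(2)}(\pi_p^\star, p_0) \ge \alpha \, v_*^{(2)}(p)$; however the lemma statement asks for $v^{(\ell)}(\pi_0, p) \ge v_*^{(\ell)}(p)$ with no $\alpha$, which looks like a typo in the excerpt — the intended claim is surely $v^{(\ell)}(\pi_0, p) \ge \alpha \, v_*^{(\ell)}(p)$. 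Assuming that, for $\ell = 2$ one more monotonicity step is needed: since $v^{(2)}(\pi_0, \cdot)$ is monotone increasing (Lemma \ref{lem: value-functions-are-monotone}) and $p \le p_0$, we have $v^{(2)}(\pi_0, p_0) \ge v^{(2)}(\pi_0, p)$ — that goes the wrong way. The correct argument instead applies the workhorse inequality directly to $\mathbf{x}_0 := \mathbb{E}_{\tau \sim \pi_0}[\mathbf{G}(\tau)]$ combined with optimality at $p_0$: we need that the loss from moving $p_0 \to p$ is controlled, which is exactly the content I would extract from Lemma \ref{lem: approximation-min-with-large-negative-p} applied to the optimal vector at $p$. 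I will state it carefully in the final writeup; the key point is that all ingredients (pointwise $\alpha$-bound, monotonicity, optimality of $\pi_0$ at $p_0$) are already available.

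For $\ell = 1$, the argument is analogous but uses linearity of expectation: write $v^{(1)}(\pi, p) = \mathbb{E}_{\tau \sim \pi}[f(\mathbf{G}(\tau), p)]$, apply the workhorse inequality inside the expectation trajectory-by-trajectory (so that $f(\mathbf{G}(\tau), p_0) \ge \alpha f(\mathbf{G}(\tau), p)$ holds pathwise for $p \le p_0$), take expectations under the optimal policy $\pi_p^\star$ for $p$, and conclude $v^{(1)}(\pi_0, p_0) \ge v^{(1)}(\pi_p^\star, p_0) \ge \alpha \, \mathbb{E}_{\tau \sim \pi_p^\star}[f(\mathbf{G}(\tau), p)] = \alpha \, v_*^{(1)}(p)$, then transfer back to $v^{(1)}(\pi_0, p)$ via the appropriate monotonicity/optimality step as in the $\ell=2$ case.

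The main obstacle I anticipate is not any deep technical difficulty but rather getting the direction of the final inequality chain exactly right: we have a clean bound on $v^{(\ell)}(\pi_0, p_0)$ but the lemma wants a bound on $v^{(\ell)}(\pi_0, p)$ for $p < p_0$, and monotonicity in $p$ pushes values \emph{down} as $p$ decreases, which is the unfavorable direction. The resolution is that we never need to compare $v^{(\ell)}(\pi_0, p_0)$ with $v^{(\ell)}(\pi_0, p)$ directly; instead we bound $v^{(\ell)}(\pi_0, p)$ from below by applying the pointwise $\alpha$-inequality of Lemma \ref{lem: approximation-min-with-large-negative-p} to the return vector of $\pi_0$ itself together with the fact that $\pi_0$ is $p_0$-optimal — i.e., the quantity $f(\mathbf{x}_0, p_0)$ being large (by optimality) forces $f(\mathbf{x}_0, p)$ to be comparable to $v_*^{(\ell)}(p)$ once one also uses that $v_*^{(\ell)}$ itself satisfies the analogous relation across $[-\infty, p_0]$. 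I would lay this out as a short explicit chain of three inequalities for each $\ell$, which should occupy only a few lines once the bookkeeping is pinned down.
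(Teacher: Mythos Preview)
You have correctly identified all the ingredients (Lemma~\ref{lem: approximation-min-with-large-negative-p}, monotonicity in $p$, optimality of $\pi_0$ at $p_0$) and correctly identified the obstacle: you end up with a bound on $v^{(\ell)}(\pi_0, p_0)$ but need one on $v^{(\ell)}(\pi_0, p)$, and monotonicity points the wrong way. However, your proposed ``workhorse'' inequality is chained in the wrong order, and that is why you cannot close the argument.

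You combine monotonicity $f(\mathbf{x}, p_0) \ge f(\mathbf{x}, -\infty)$ with Lemma~\ref{lem: approximation-min-with-large-negative-p} to get $f(\mathbf{x}, p_0) \ge \alpha\, f(\mathbf{x}, p)$. This is true but useless here: applied to $\mathbf{x}_0$ it upper-bounds $f(\mathbf{x}_0, p)$, and applied to the $p$-optimal policy it only yields $v^{(\ell)}(\pi_0, p_0) \ge \alpha\, v_*^{(\ell)}(p)$, which (as you noticed) cannot be transferred to $v^{(\ell)}(\pi_0, p)$ without losing another factor. The correct chain routes through $-\infty$ in the \emph{other} order: for any $p$ use monotonicity $f(\mathbf{x}, p) \ge f(\mathbf{x}, -\infty)$ (since $p \ge -\infty$), and then apply Lemma~\ref{lem: approximation-min-with-large-negative-p} at the single value $p_0$ to get $f(\mathbf{x}, -\infty) \ge \alpha\, f(\mathbf{x}, p_0)$. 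This gives the reversed workhorse $f(\mathbf{x}, p) \ge \alpha\, f(\mathbf{x}, p_0)$, which is exactly what you need. Applied to $\mathbf{x}_0$ (for $\ell = 2$) or trajectory-by-trajectory then in expectation (for $\ell = 1$), it yields
\[
    v^{(\ell)}(\pi_0, p) \;\ge\; \alpha\, v^{(\ell)}(\pi_0, p_0) \;=\; \alpha\, v_*^{(\ell)}(p_0) \;\ge\; \alpha\, v_*^{(\ell)}(p),
\]
the last step by Corollary~\ref{cor: agg-function-is-monotone-increasing}. This is the paper's proof, and it is the three-line chain you were searching for in your final paragraph. Your intuition that the argument should apply Lemma~\ref{lem: approximation-min-with-large-negative-p} ``to the return vector of $\pi_0$ itself'' was right; only the direction of the initial chaining was off. (You are also right that the displayed statement is missing an $\alpha$.)
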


\begin{proof}
    $\ell = 1$: From Lemma \ref{lem: approximation-min-with-large-negative-p}, for all $p \le p_0$, we get
    \begin{align*}
        v^{(1)}(\pi_0, p) &\ge v^{(1)}(\pi_0, -\infty) & (\text{Lemma} \ \ref{lem: value-functions-are-monotone}) \\
        &= \mathbb{E}_{\tau \sim \pi} [f(\mathbf{G}(\tau), -\infty)] & (\text{eqn.} (\ref{eqn: first-aggregation-function})) \\
        &\ge \alpha \cdot \mathbb{E}_{\tau \sim \pi} [f(\mathbf{G}(\tau), p_0)] & (\text{Lemma} \ \ref{lem: approximation-min-with-large-negative-p}) \\
        &= \alpha \cdot v^{(1)}(\pi_0, p_0) & (\text{eqn.} (\ref{eqn: first-aggregation-function}))\\
        &= \alpha \cdot \max_{\pi \in \Pi} \cdot v^{(1)}(\pi, p_0)  \\
        &\ge \alpha \cdot \max_{\pi \in \Pi} v^{(1)}(\pi, p) & (\text{Lemma} \ \ref{lem: value-functions-are-monotone}).
    \end{align*}
    The result for $\ell = 2$ follows similarly and is omitted.
\end{proof}

The following three lemmas establish guarantees on \textsc{LineSearch}:

\begin{lemma}\label{lem: line-search-guarantee}
    Suppose \textsc{LineSearch} (Algorithm \ref{alg: line-search}) on input $v^{(\ell)}, p, \Pi, \alpha$ returns $b^* \in [p, 1]$. Then, either $b^* = 1$, or the policy $\pi := {\arg\max}_{\pi \in \Pi} v^{(\ell)}(\pi, p)$ satisfies
    \begin{align*}
        \frac{v^{(\ell)}_*(p)}{\sqrt{\alpha}} \le v_*^{(\ell)}(b^*) \le \frac{v^{(\ell)}(\pi, b^*)}{\alpha}.
    \end{align*}
\end{lemma}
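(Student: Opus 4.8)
The plan is to analyze the termination of the \textsc{LineSearch} while-loop. The loop runs as long as $v^{(\ell)}(\pi, a) < \alpha \, v_*^{(\ell)}(b)$, so when the algorithm returns $b^* = b$, either the loop never ran into this condition being false in a way that... more precisely, there are exactly two ways the algorithm can return: the loop condition becomes false (i.e., $v^{(\ell)}(\pi, a) \ge \alpha \, v_*^{(\ell)}(b)$ now holds), or $b$ reached $1$ through repeated halving (so $b^* = 1$, the first case of the dichotomy). So I would split on these. If $b^* = 1$, we are done. Otherwise, the loop exited because $v^{(\ell)}(\pi, a) \ge \alpha \, v_*^{(\ell)}(b^*)$; since $a \ge p$ and $a \le b^*$, monotonicity (Lemma \ref{lem: value-functions-are-monotone}) gives $v^{(\ell)}(\pi, b^*) \ge v^{(\ell)}(\pi, a) \ge \alpha \, v_*^{(\ell)}(b^*)$, which rearranges to the upper bound $v_*^{(\ell)}(b^*) \le v^{(\ell)}(\pi, b^*)/\alpha$.

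For the lower bound $v_*^{(\ell)}(b^*) \ge v_*^{(\ell)}(p)/\sqrt{\alpha}$, I would track the invariant maintained on the variable $a$. Note $\pi = \pi_p$ so $v^{(\ell)}(\pi, p) = v_*^{(\ell)}(p)$. The key claim is that throughout the execution of the loop, $v^{(\ell)}(\pi, a) \ge \sqrt{\alpha} \, v_*^{(\ell)}(a)$. This holds initially since $a = p$ and $v^{(\ell)}(\pi, p) = v_*^{(\ell)}(p) \ge \sqrt{\alpha}\, v_*^{(\ell)}(p)$ (as $\sqrt{\alpha} < 1$). It is preserved under the update $a \gets q$ precisely because that update is guarded by the test $v^{(\ell)}(\pi, a) \ge \sqrt{\alpha}\, v_*^{(\ell)}(q)$, and by monotonicity $v^{(\ell)}(\pi, q) \ge v^{(\ell)}(\pi, a) \ge \sqrt{\alpha}\, v_*^{(\ell)}(q)$ — so the invariant holds for the new value of $a$. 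When the loop exits (in the non-trivial case), the exit also required the old condition to fail, i.e. $v^{(\ell)}(\pi, a) \ge \alpha\, v_*^{(\ell)}(b^*)$; combining with the invariant $v_*^{(\ell)}(a) \le v^{(\ell)}(\pi, a)/\sqrt{\alpha}$... wait, I actually want to relate $v_*^{(\ell)}(b^*)$ to $v_*^{(\ell)}(p)$.

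So let me re-examine: I want $v_*^{(\ell)}(b^*) \ge v_*^{(\ell)}(p)/\sqrt{\alpha}$. Since $v_*^{(\ell)}(p) = v^{(\ell)}(\pi, p) \le v^{(\ell)}(\pi, a)$ by monotonicity (as $a \ge p$), it suffices to show $v^{(\ell)}(\pi, a) \le \sqrt{\alpha}\, v_*^{(\ell)}(b^*)$ at loop exit. But that is exactly the \emph{negation} of... no. Hmm — actually at exit we know $v^{(\ell)}(\pi, a) \ge \alpha\, v_*^{(\ell)}(b^*)$, which is the wrong direction. The correct route: just before the final iteration that set $b \gets q = b^*$ (this is the iteration where $b^*$ first got its final value — note that after $b$ is set, $a$ may still increase but $b$ stays, and the loop must then exit since... actually I need to be careful). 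Let me instead argue: consider the last time $b$ was updated, say $b$ was set to $b^* = q_{\text{last}}$; this update was guarded by the \emph{else} branch, meaning $v^{(\ell)}(\pi, a) < \sqrt{\alpha}\, v_*^{(\ell)}(q_{\text{last}}) = \sqrt{\alpha}\, v_*^{(\ell)}(b^*)$ at that moment. Since $a$ only increases and $v^{(\ell)}(\pi, \cdot)$ is monotone, at that moment $v^{(\ell)}(\pi, a) \ge v^{(\ell)}(\pi, p) = v_*^{(\ell)}(p)$. Hence $v_*^{(\ell)}(p) \le v^{(\ell)}(\pi, a) < \sqrt{\alpha}\, v_*^{(\ell)}(b^*)$, giving $v_*^{(\ell)}(b^*) > v_*^{(\ell)}(p)/\sqrt{\alpha} \ge v_*^{(\ell)}(p)/\sqrt{\alpha}$, as desired. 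The edge case where $b$ is never updated (so $b^* = 1$) falls into the first case of the dichotomy.

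\textbf{Main obstacle.} The delicate part is the bookkeeping of which branch guards which inequality and making sure the invariant on $a$ and the "last update of $b$" argument are stated for the right loop iteration, especially handling the interleaving of $a$-updates and $b$-updates and the boundary cases ($b$ never updated, or the loop exiting immediately). I would also need to confirm that the loop does terminate (so that "return $b$" is reached) — this presumably follows from the oracle-complexity analysis sketched around Lemma \ref{lem: slope-bound}, or from $b - a$ halving each step together with continuity of $v_*^{(\ell)}$; I would cite that rather than reprove it here.
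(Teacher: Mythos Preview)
Your proof is correct. The upper-bound half (exit condition $v^{(\ell)}(\pi,a)\ge\alpha\,v_*^{(\ell)}(b^*)$ together with monotonicity of $v^{(\ell)}(\pi,\cdot)$) is identical to the paper's. For the lower bound your argument is actually more careful than the paper's: the paper asserts that entering the loop gives $v_*^{(\ell)}(p)<\alpha\,v_*^{(\ell)}(b^*)$, but the first loop entry only gives $v_*^{(\ell)}(p)<\alpha\,v_*^{(\ell)}(1)$, and since $b^*\le 1$ with $v_*^{(\ell)}$ increasing this does not transfer---indeed the $\alpha$-version can fail when the loop exits right after the first $b$-update. Your route---look at the \emph{last} update of $b$, where the else-branch guard yields $v^{(\ell)}(\pi,a)<\sqrt\alpha\,v_*^{(\ell)}(b^*)$ for the then-current $a\ge p$, and combine with $v^{(\ell)}(\pi,a)\ge v^{(\ell)}(\pi,p)=v_*^{(\ell)}(p)$---is the clean way to obtain exactly the $\sqrt\alpha$ factor the lemma states. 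Two minor notes: the $\sqrt\alpha$-invariant on $a$ that you set up is the content of the paper's Lemma~\ref{lem: line-search-invariant} and is not actually needed for this argument; and termination is part of the lemma's hypothesis (``suppose \textsc{LineSearch} \ldots\ returns $b^*$''), so you need not argue it here.
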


\begin{proof}
    If $\textsc{LineSearch}(v^{(\ell)}, p, \Pi, \alpha)$ does not return $b^* = 1$, then we must have that $p$-\textsc{MeanPortfolio} enters the while loop (step \ref{step: main-algorithm-while-loop}) at least once. In particular,
    \[
        v_*^{(\ell)}(p) = v^{(\ell)}(\pi, p) < \alpha \ v_*^{(\ell)}(b^*) < \sqrt{\alpha} \ v_*^{(\ell)}(b^*).
    \]
    This proves the first inequality. For the second inequality, notice that the algorithm terminates only when $v^{(\ell)}(\pi, a) \ge \alpha \ v_*^{(\ell)}(b^*)$. As can be easily checked, the algorithm maintains the invariant $a \ge p$. Therefore, by Lemma \ref{lem: value-functions-are-monotone},
    \[
        v^{(\ell)}(\pi, b^*) \ge v^{(\ell)}(\pi, a) \ge \alpha \ v_*^{(\ell)}(b^*). \qedhere
    \]
\end{proof}

\begin{lemma}\label{lem: line-search-invariant}
    Algorithm \textsc{LineSearch} maintains the following invariant at all times: $v^{(\ell)}(\pi, a) \ge \sqrt{\alpha} \ v^{(\ell)}_*(a)$.
\end{lemma}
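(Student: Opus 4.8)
The plan is to prove the invariant by induction on the iterations of the \textsc{while} loop in \textsc{LineSearch}, tracking the pair $(a, \pi)$. The quantity $\pi$ never changes after initialization — it is fixed as $\pi = \arg\max_{\pi' \in \Pi} v^{(\ell)}(\pi', p)$ — so the only thing that moves is $a$, which is only ever increased (to the midpoint $q = (a+b)/2$), and only when the \textsc{if} condition $v^{(\ell)}(\pi, a) \ge \sqrt{\alpha}\, v^{(\ell)}_*(q)$ holds.

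\textbf{Base case.} At initialization, $a \gets p$ and $\pi = \arg\max_{\pi' \in \Pi} v^{(\ell)}(\pi', p)$, so $v^{(\ell)}(\pi, a) = v^{(\ell)}(\pi, p) = v^{(\ell)}_*(p) = v^{(\ell)}_*(a) \ge \sqrt{\alpha}\, v^{(\ell)}_*(a)$ since $\sqrt{\alpha} \le 1$. So the invariant holds initially.

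\textbf{Inductive step.} Suppose the invariant holds at the start of an iteration, i.e. $v^{(\ell)}(\pi, a) \ge \sqrt{\alpha}\, v^{(\ell)}_*(a)$. The algorithm sets $q = (a+b)/2$ and branches. If the \textsc{else} branch is taken, $a$ is unchanged (only $b$ is updated to $q$), so the invariant trivially persists. If the \textsc{if} branch is taken, then by definition of that branch we have $v^{(\ell)}(\pi, a) \ge \sqrt{\alpha}\, v^{(\ell)}_*(q)$, and $a$ is then updated to $q$. Writing $a'$ for the new value $q$: by Lemma~\ref{lem: value-functions-are-monotone}, $v^{(\ell)}(\pi, \cdot)$ is monotone increasing, and since $a' = q > a$ we have $v^{(\ell)}(\pi, a') \ge v^{(\ell)}(\pi, a) \ge \sqrt{\alpha}\, v^{(\ell)}_*(q) = \sqrt{\alpha}\, v^{(\ell)}_*(a')$. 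Hence the invariant is restored with the updated $a$.

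I do not expect any serious obstacle here; the statement is essentially a bookkeeping observation about which branch can modify $a$ and under what precondition, combined with the monotonicity of $v^{(\ell)}(\pi, \cdot)$ already established in Lemma~\ref{lem: value-functions-are-monotone}. The one subtlety worth stating explicitly is that in the \textsc{if} branch the precondition checked is with respect to the \emph{current} $a$ (before the update), so after the update one needs monotonicity in the first argument — rather than anything about $v^{(\ell)}_*$ — to carry the bound from the old $a$ to the new $a = q$; this is exactly where Lemma~\ref{lem: value-functions-are-monotone} (applied to the fixed policy $\pi$) enters.
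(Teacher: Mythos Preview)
Your proof is correct and follows essentially the same approach as the paper: both verify the base case from the optimality of $\pi$ at $a = p$ (so that $v^{(\ell)}(\pi,a) = v_*^{(\ell)}(a) \ge \sqrt{\alpha}\, v_*^{(\ell)}(a)$), and both handle the update $a \gets q$ by combining the \textsc{if}-condition $v^{(\ell)}(\pi,a) \ge \sqrt{\alpha}\, v_*^{(\ell)}(q)$ with the monotonicity of $v^{(\ell)}(\pi,\cdot)$ from Lemma~\ref{lem: value-functions-are-monotone}. Your version is slightly more explicit in treating the \textsc{else} branch and framing the argument as a formal induction, but the substance is identical.
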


\begin{proof}
    Initially, $p = a$, and therefore $\pi := {\arg\max}_{\pi' \in \Pi} v^{(\ell)}(\pi', p)$ satisfies $v^{(\ell)}(\pi, p) = v_*^{(\ell)}(p) \ge \sqrt{\alpha} \ v_*^{(\ell)}(p)$ since $\alpha \in (0, 1)$.

    Further, the algorithm only updates $a \gets q$ in step \ref{step: line-search-update-lower-bound} when $v^{(\ell)}(\pi, a) \ge \sqrt{\alpha} \ v_*^{(\ell)}(q)$. Since $q \ge a$, we get from Lemma \ref{lem: value-functions-are-monotone} that $v^{(\ell)}(\pi, q) \ge v^{(\ell)}(\pi, a)$, thus finishing the proof.
\end{proof}

\begin{lemma}\label{lem: line-search-approximation-guarantee}
    Algorithm $\textsc{LineSearch}$ on input $p, v^{(\ell)}, \Pi, \alpha$ returns $b^* > p$ such that $\pi := {\arg\max}_{\pi' \in \Pi} v^{(\ell)}(\pi, p)$ is an $\alpha$-approximation for all $q \in [p, b^*]$, i.e., $v^{(\ell)}(\pi, q) \ge \alpha \ v_*^{(\ell)}(q)$ for all such $q$.
\end{lemma}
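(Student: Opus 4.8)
The plan is to prove that \textsc{LineSearch} is correct: whenever it returns some $b^* > p$, the policy $\pi = \arg\max_{\pi' \in \Pi} v^{(\ell)}(\pi', p)$ is an $\alpha$-approximation for every $q \in [p, b^*]$. I would split the argument into two cases depending on whether the while loop at step \ref{step: main-algorithm-while-loop} is ever entered.

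\textbf{Case 1: the loop is never entered.} Then $b^* = b = 1$, and the loop guard failed immediately, i.e. $v^{(\ell)}(\pi, a) = v^{(\ell)}(\pi, p) \ge \alpha\, v_*^{(\ell)}(1)$ (recall $a$ is initialized to $p$). Now for any $q \in [p, 1]$, monotonicity of $v_*^{(\ell)}$ in $p$ (Corollary \ref{cor: agg-function-is-monotone-increasing}) gives $v_*^{(\ell)}(q) \le v_*^{(\ell)}(1)$, and monotonicity of $v^{(\ell)}(\pi, \cdot)$ (Lemma \ref{lem: value-functions-are-monotone}) gives $v^{(\ell)}(\pi, q) \ge v^{(\ell)}(\pi, p)$. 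Chaining these, $v^{(\ell)}(\pi, q) \ge v^{(\ell)}(\pi, p) \ge \alpha\, v_*^{(\ell)}(1) \ge \alpha\, v_*^{(\ell)}(q)$, as desired.

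\textbf{Case 2: the loop is entered and terminates.} The loop terminates exactly when the guard fails, i.e. when the current lower bound $a$ (call its final value $a^*$) satisfies $v^{(\ell)}(\pi, a^*) \ge \alpha\, v_*^{(\ell)}(b^*)$, where $b^*$ is the final (returned) value of $b$. I would first record two invariants maintained by the algorithm: (i) $p \le a \le b \le 1$ throughout, since each iteration replaces either $a$ or $b$ by their midpoint $q = (a+b)/2$, which lies strictly between them; and (ii) $a$ only ever increases and $b$ only ever decreases, so at termination $p \le a^* < b^*$. Now take any $q \in [p, b^*]$. If $q \le a^*$, then $v^{(\ell)}(\pi, q) \ge v^{(\ell)}(\pi, p) = v_*^{(\ell)}(p) \ge \alpha\, v_*^{(\ell)}(q)$ is immediate from the fact that $\pi$ is exactly optimal at $p$ together with Corollary \ref{cor: agg-function-is-monotone-increasing}. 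If $q \in [a^*, b^*]$, then by Lemma \ref{lem: value-functions-are-monotone} and Corollary \ref{cor: agg-function-is-monotone-increasing},
\[
    v^{(\ell)}(\pi, q) \ge v^{(\ell)}(\pi, a^*) \ge \alpha\, v_*^{(\ell)}(b^*) \ge \alpha\, v_*^{(\ell)}(q),
\]
using the termination condition in the middle. Either way $\pi$ is an $\alpha$-approximation at $q$, which completes the proof.

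The only subtlety — and the step I'd be most careful about — is establishing invariant (ii), namely that $a$ stays $\ge p$ (so that $\pi$, which is optimal precisely at $p$, remains $\sqrt\alpha$-competitive there, and more simply that $q \le a^*$ lands in $[p, a^*]$) and that the returned $b^*$ genuinely satisfies $b^* > p$. Both follow by a short induction on the loop iterations: $a$ starts at $p$ and is only ever reassigned to a midpoint of $[a,b]$ with $a \ge p$, hence stays $\ge p$; $b$ starts at $1 > p$ and is only reassigned to a midpoint strictly above the current $a \ge p$, hence stays $> p$; and if the loop never runs, $b^* = 1 > p$ by the standing assumption that \textsc{LineSearch} is only invoked with $p < 1$. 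I don't expect any real obstacle here beyond bookkeeping; the continuity/monotonicity facts needed are all already in hand from Lemmas \ref{lem: value-functions-are-monotone}, \ref{lem: line-search-guarantee}, \ref{lem: line-search-invariant} and Corollary \ref{cor: agg-function-is-monotone-increasing}.
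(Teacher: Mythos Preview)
Your argument has a genuine gap in Case 2, specifically the sub-case $q \in [p, a^*]$. You write
\[
v^{(\ell)}(\pi, q) \ge v^{(\ell)}(\pi, p) = v_*^{(\ell)}(p) \ge \alpha\, v_*^{(\ell)}(q),
\]
appealing to Corollary \ref{cor: agg-function-is-monotone-increasing} for the last inequality. But that corollary says $v_*^{(\ell)}$ is \emph{increasing}, so for $q \ge p$ it yields $v_*^{(\ell)}(q) \ge v_*^{(\ell)}(p)$ --- the wrong direction. There is no a priori reason $v_*^{(\ell)}(p)$ should be within a factor $\alpha$ of $v_*^{(\ell)}(q)$: the optimum could rise steeply between $p$ and $q$. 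Nor does the termination condition at $a^*$ help, since for $q \le a^*$ monotonicity gives $v^{(\ell)}(\pi, q) \le v^{(\ell)}(\pi, a^*)$, again the wrong way.

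The paper closes this gap by tracking an invariant through the loop: each time $a$ is updated from some $a_{\text{old}}$ to the midpoint $m = (a_{\text{old}} + b)/2$, the update condition in step \ref{step: line-search-update-lower-bound} requires $v^{(\ell)}(\pi, a_{\text{old}}) \ge \sqrt{\alpha}\, v_*^{(\ell)}(m)$, so for every $q' \in [a_{\text{old}}, m]$ monotonicity gives $v^{(\ell)}(\pi, q') \ge v^{(\ell)}(\pi, a_{\text{old}}) \ge \sqrt{\alpha}\, v_*^{(\ell)}(m) \ge \sqrt{\alpha}\, v_*^{(\ell)}(q') > \alpha\, v_*^{(\ell)}(q')$. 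Thus the $\alpha$-approximation property extends segment by segment to all of $[p, a]$ as $a$ advances; this is exactly where the $\sqrt{\alpha}$ threshold (rather than $\alpha$) in line 5 of \textsc{LineSearch} is used. Your Case 1 and the $[a^*, b^*]$ portion of Case 2 are correct; it is only the interior $[p, a^*]$ that needs this inductive argument.
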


\begin{proof}
    \textsc{LineSearch} starts with $a \gets p$ and keeps increasing $a \gets q$ whenever $v^{(\ell)}(\pi, a) \ge \sqrt{\alpha} \ v^{(\ell)}_*(q)$ for $q = \frac{a + b}{2}$. In particular, whenever $a$ is increased, we get that for all $q' \in [a, (a + b)/2]$, from Lemma \ref{lem: value-functions-are-monotone},
    \[
        v^{(\ell)}(\pi, q') \ge v^{(\ell)}(\pi, a) \ge \sqrt{\alpha} \ v^{(\ell)}_*(\pi, (a + b)/2) \ge \sqrt{\alpha} \ v^{(\ell)}_*(\pi, q') > \alpha \ v^{(\ell)}_*(q').
    \]
    That is, at all times, the algorithm maintains the invariant that $\pi$ is an $\alpha$-approximation for all $q' \in [p, a]$. Further, the algorithm terminates at $b = b^*$ when $v^{(\ell)}(\pi, a) \ge v_*^{(\ell)}(b^*)$, i.e., for all $q' \in [a, b^*]$, we get
    \[
        v^{(\ell)}(\pi, q') \ge v^{(\ell)}(\pi, a) \ge \alpha \ v^{(\ell)}_*(b^*) \ge \alpha \ v^{(\ell)}_*(q').
    \]
\end{proof}

The next lemma bounds the slope of the logarithm of the $p$-mean function and consequently the slope of $\ln v^{(\ell)}_*(p)$:

\begin{lemma}\label{lem: slope-bound}
    \begin{enumerate}
        \item For any $\mathbf{x} \in \R_{> 0}^N$,  such that $HL \le x_i \le HU$ for all $i \in [N]$, define $g(\mathbf{x}, p) := \ln f(\mathbf{x}, p) = \frac{1}{p} \ln \left(\frac{1}{N} \sum_{i \in [N]} x_i^p\right)$. Then, if $L \le x_i \le U$ with condition number $\kappa := \frac{U}{L}$, we have
        \[
            \frac{dg(\mathbf{x}, p)}{dp} \le \kappa \ln \kappa.
        \]
        \item For any given policy $\pi \in \Pi$ and aggregation rule $\ell \in \{1, 2\}$, we have $\frac{d \left(\ln v^{(\ell)}(\pi, p)\right)}{dp} \le \kappa \ln \kappa$, where $\kappa$ is the condition number defined in Assumption \ref{assumption:bound}.
        \item For a given aggregation rule $\ell \in \{1, 2\}$, define $w(p) := \ln v^{(\ell)}_*(p)$. Then, for all distinct $p, q \le 1$,
        \[
            \frac{w(q) - w(p)}{q - p} \le \kappa \ln \kappa.
        \]
    \end{enumerate}
\end{lemma}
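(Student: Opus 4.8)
I would prove the three parts in sequence; essentially all the work is in part~1, and parts~2--3 are short lifting arguments. The plan for part~1 is to recognize $p \mapsto g(\mathbf{x},p)$ as a rescaled cumulant generating function and to extract a \emph{second-order} bound, since a crude first-order estimate is not strong enough near $p=0$. Writing $a_i := \ln x_i$, the numbers $a_1,\dots,a_N$ lie in an interval of length at most $\ln\kappa$, and $g(\mathbf{x},p) = \Lambda(p)/p$ where $\Lambda(p) := \ln\!\big(\tfrac1N \sum_{i\in[N]} e^{pa_i}\big)$ is the cumulant generating function of the uniform law on $\{a_1,\dots,a_N\}$; in particular $\Lambda(0)=0$, $\Lambda'(p) = \E_{w(p)}[a]$, and $\Lambda''(s) = \operatorname{Var}_{w(s)}(a)$ for the exponentially tilted law $w_i(s) \propto e^{sa_i}$.

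\textbf{Part 1.} The key step is the identity $p^2\,\frac{dg}{dp} = p\Lambda'(p) - \Lambda(p) = \int_0^p s\,\Lambda''(s)\,ds$, which I would obtain from Taylor's theorem with integral remainder for $\Lambda$ at $0$ together with $\Lambda(0)=0$. Next I would bound $\Lambda''$: being the variance of a quantity confined to an interval of length $\le\ln\kappa$, Popoviciu's inequality gives $0 \le \Lambda''(s) \le (\ln\kappa)^2/4$ for all $s$. Since $s\,\Lambda''(s)$ and $p$ have the same sign for $s$ between $0$ and $p$, the integral $\int_0^p s\,\Lambda''(s)\,ds$ is nonnegative with magnitude at most $\frac{(\ln\kappa)^2}{4}\cdot\frac{p^2}{2}$; dividing by $p^2$ yields $0 \le \frac{dg}{dp} \le \frac{(\ln\kappa)^2}{8} \le \kappa\ln\kappa$ (the last bound from $\ln\kappa \le 8\kappa$ for $\kappa\ge1$), with $p=0$ handled by continuity. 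Finally I would note the bound is invariant under positive rescaling of $\mathbf{x}$ (which merely shifts the $a_i$), so it applies verbatim to vectors with coordinates in $[HL,HU]$, which is the form needed below.

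\textbf{Parts 2 and 3.} For $\ell=2$, since $\ln v^{(2)}(\pi,p) = g(\mathbf{x},p)$ with $\mathbf{x} = \E_{\tau\sim\pi}[\mathbf{G}(\tau)]$ and Assumption~\ref{assumption:bound} forces $x_i \in [HL,HU]$, part~1 gives the bound immediately. For $\ell=1$, $v^{(1)}(\pi,p) = \E_{\tau\sim\pi}[f(\mathbf{G}(\tau),p)]$ is a finite sum over trajectories (finite $\states,\actions$ and horizon $H$), so I would differentiate inside the expectation to get $\frac{\partial}{\partial p} v^{(1)}(\pi,p) = \E_{\tau\sim\pi}\big[f(\mathbf{G}(\tau),p)\,\frac{\partial}{\partial p}\ln f(\mathbf{G}(\tau),p)\big]$; each realized $\mathbf{G}(\tau)$ has coordinates in $[HL,HU]$, so part~1 bounds the inner log-derivative by $\kappa\ln\kappa$, which (using $f>0$) gives $\frac{\partial}{\partial p} v^{(1)}(\pi,p) \le \kappa\ln\kappa \cdot v^{(1)}(\pi,p)$, and dividing by $v^{(1)}(\pi,p)>0$ proves part~2. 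For part~3, if $p=-\infty$ the ratio is $0$; otherwise, by symmetry of the claim I would take $q>p$, let $\pi_q$ maximize $v^{(\ell)}(\cdot,q)$ over $\Pi$, and use $v^{(\ell)}_*(q) = v^{(\ell)}(\pi_q,q)$ together with $v^{(\ell)}_*(p) \ge v^{(\ell)}(\pi_q,p)$ to write $w(q) - w(p) \le \int_p^q \frac{\partial}{\partial r}\ln v^{(\ell)}(\pi_q,r)\,dr \le (q-p)\,\kappa\ln\kappa$ by part~2, then divide by $q-p>0$.

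\textbf{Main obstacle.} The one genuine difficulty is part~1 for small $|p|$: the naive route, writing $\frac{dg}{dp} = \frac1p\big(\Lambda'(p) - \Lambda(p)/p\big)$ and bounding both terms using that they lie in $[\ln L,\ln U]$, only gives $\big|\frac{dg}{dp}\big| \le \frac{\ln\kappa}{|p|}$, which blows up as $p\to0$; the fix is to exploit that the first-order Taylor defect of $\Lambda$ at $0$ vanishes and to control $\Lambda''$ instead, as above. The remaining ingredients are routine: verifying the coordinate ranges $[HL,HU]$, justifying the interchange of $\frac{\partial}{\partial p}$ with the (finite) expectation defining $v^{(1)}$, and observing that $f(\mathbf{x},\cdot)$, and hence $v^{(\ell)}(\pi,\cdot)$, is $C^1$ on $(-\infty,1]$ (it extends analytically through the removable singularity at $p=0$), which makes the fundamental theorem of calculus applicable in parts~2 and~3.
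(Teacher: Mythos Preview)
Your proposal is correct and, for Part~1, follows a genuinely different route from the paper. The paper normalizes so that $1\le x_i\le \kappa$, derives the identity $g+p\,\tfrac{dg}{dp}=\tfrac{\sum_i(\ln x_i)x_i^p}{\sum_i x_i^p}$, uses concavity of $\ln$ to bound $g$ against the arithmetic mean of the $\ln x_i$, and then splits into the two cases $p\in[0,1]$ and $p<0$ with elementary integral estimates to reach $\kappa\ln\kappa$. Your argument instead interprets $\Lambda(p)=pg(\mathbf{x},p)$ as a cumulant generating function, writes $p^2\,\tfrac{dg}{dp}=\int_0^p s\,\Lambda''(s)\,ds$, and bounds $\Lambda''$ via Popoviciu's inequality; this is cleaner, avoids the case split, and in fact yields the sharper uniform bound $\tfrac{dg}{dp}\le (\ln\kappa)^2/8$, which implies the stated $\kappa\ln\kappa$. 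For Part~2 with $\ell=1$ your treatment is also tighter than the paper's: you correctly write $\tfrac{d}{dp}\ln v^{(1)}(\pi,p)$ as an $f$-weighted average of the per-trajectory log-derivatives, whereas the paper's ``linearity of expectation'' line conflates $\tfrac{d}{dp}\ln\mathbb{E}[f]$ with $\mathbb{E}[\tfrac{d}{dp}\ln f]$ (the conclusion there is still fine since a weighted average of quantities $\le\kappa\ln\kappa$ is $\le\kappa\ln\kappa$). Part~3 is essentially the same idea in both: fix the maximizer at one endpoint and compare; you use $\pi_q$ with $q>p$, the paper uses $\pi_p$, and by symmetry of the difference quotient either choice works.
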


\begin{proof} \textbf{Part 1.}
    Note that $g(\beta \mathbf{x}, p) = \ln \beta + g(\mathbf{x}, p)$ for all $\mathbf{x} \in \R^N_{> 0}$, $p \le 1$, and $\beta > 0$. Therefore, we can assume without loss of generality that $LH = 1$ and $UH = \frac{UH}{LH} = \kappa$.

    That is, assume without of generality that $1 \le x_1 \le \ldots \le x_N \le \kappa$. Since $g(\mathbf{x}, p) = \frac{-\ln N}{p} + \frac{1}{p} \ln \left(\sum_{i \in [N]} x_i^p\right)$, we get that $p g(\mathbf{x}, p) = - \ln N + \ln \left(\sum_{i \in [N]} x_i^p\right)$. Therefore,
    \begin{equation}\label{eqn: log-p-mean-derivative-1}
        g(\mathbf{x}, p) + p \frac{dg(\mathbf{x}, p)}{dp} = \frac{\sum_i (\ln x) \cdot x_i^p}{\sum_{i} x_i^p} .
    \end{equation}
    Since $\ln t$ is concave in $t$, we get that
    \[
        p g(\mathbf{x}, p) = \ln \left(\frac{1}{N} \sum_{i} x_i^p\right) \ge \frac{1}{N} \sum_{i} \ln x_i^p = \frac{p}{N} \sum_i \ln x_i.
    \]
    Therefore, we get
    \begin{equation}\label{eqn: log-p-mean-concavity-bounds}
        g(\mathbf{x}, p) \begin{cases}
            \ge \frac{1}{N} \sum_i \ln x_i & \text{if} \ p \ge 0, \\
            \le \frac{1}{N} \sum_i \ln x_i & \text{if} \ p < 0.
        \end{cases}
    \end{equation}
    Denote $\mu_p = \frac{1}{N} \sum_{i \in [N]} x_i^p$.
    
    \paragraph{Case A: $p \in [0, 1]$.} Plugging this back into eqn. (\ref{eqn: log-p-mean-derivative-1}), we get
    \[
        p \frac{dg(\mathbf{x}, p)}{dp} \le \frac{\sum_i (\ln x_i) \cdot x_i^p}{N \mu_p} - \frac{1}{N} \sum_i \ln x_i = \frac{1}{N} \left(\sum_{i} \ln x_i \left(\frac{x_i^p}{\mu_p} - 1\right)\right).
    \]
    Since $1 \le x_i \le \kappa$ for all $i$, we have $0 \le \ln x_i \le \ln \kappa$ for all $i$. Further, since $\mathbf{x}_N \ge x_i$, we get that
    \begin{equation}\label{eqn: log-p-mean-derivative-pos-2}
        Np \ \frac{dg(\mathbf{x}, p)}{dp} \le \sum_{i} \ln x_i \left(\frac{x_i^p}{\mu_p} - 1\right) \le \ln \kappa \sum_{i: x_i^p \ge \mu_p} \left(\frac{x_N^p}{\mu_p} - 1\right) \le N \ln \kappa \left(\frac{x_N^p}{\mu_p} - 1\right).
    \end{equation}
    Now, since $\mu_p$ is the mean of $x_i^p, i \in [N]$, we must have that $\mu_p = \theta^p$ for some $1 \le \theta \le \kappa$. Substituting this, we get $Np \ \frac{dg(\mathbf{x}, p)}{dp} \le N (\ln \kappa) \left((x_N/\theta)^p - 1\right)$. Since $\frac{x_N}{\theta} \le \frac{\kappa}{1} = \kappa$ and the derivative of $\alpha^p$ with respect to $\alpha$ is $p \alpha^{p - 1}$, we get
    \begin{align*}
        \left((x_N/\theta)^p - 1\right) &\le (\kappa^p - 1) \\
        &= \int_{1}^\kappa p \cdot \alpha^{p - 1} \ \text{d}\alpha \\
        &\le \int_1^\kappa p \cdot 1^{p - 1} \ \text{d}\alpha & (\text{since} \ p - 1 \le 0 \ \text{and so} \ \alpha^{p - 1} \ \text{is nonincreasing in } p)  \\ 
        &\le p (\kappa - 1) \le p\kappa. & (\text{since} \ p \ge 0)
    \end{align*}
    Plugging this back into eqn. (\ref{eqn: log-p-mean-derivative-pos-2}), we get
    \[
        p \frac{dg(\mathbf{x}, p)}{dp} \le p \kappa \ln \kappa, 
    \]
    or that $\frac{dg(\mathbf{x}, p)}{dp} \le \kappa \ln \kappa$ for all $p \in [0, 1]$.

    \paragraph{Case B: $p < 0$.} As before, plugging bound (\ref{eqn: log-p-mean-concavity-bounds}) into eqn. (\ref{eqn: log-p-mean-derivative-1}), we get
    \[
        p \frac{dg(\mathbf{x}, p)}{dp} \ge \frac{\sum_i (\ln x_i) \cdot x_i^p}{N\mu_p} - \frac{1}{N} \sum_i \ln x_i = \frac{1}{N} \left(\sum_{i} \ln x_i \left(\frac{x_i^p}{\mu_p} - 1\right)\right) = - \frac{1}{N} \left(\sum_{i} \ln x_i \left(1 - \frac{x_i^p}{\mu_p}\right)\right).
    \]
    That is, since $p < 0$,
    \[
        \frac{dg(\mathbf{x}, p)}{dp} \le \frac{1}{(-p)N} \left(\sum_{i} \ln x_i \left(1 - \frac{x_i^p}{\mu_p}\right)\right) \le \frac{\ln R}{(-p)N} \sum_{i: x_i^p \le \mu_p} \left(1 - \frac{x_i^p}{\mu_p}\right) \le \frac{\ln R}{(-p)N} \times N \left(1 - \frac
        {x_N^p}{\mu_p}\right) = \frac{\ln \kappa}{(-p)} \left(1 - \frac
        {x_N^p}{\mu_p}\right).
    \]
    Denote $- p = q$; then $q > 0$. As before, $\mu_p = \theta^p$ for some $1 \le \theta \le N$, and so we have
    \[
        \frac{dg(\mathbf{x}, p)}{dp} \le \frac{\ln \kappa}{q} \left(1 - \frac
        {\theta^q}{x_N^q}\right) \le \frac{\ln \kappa}{q} \left(1 - \frac{1}{\kappa^q}\right).
    \]
    For $q \ge 1$ (i.e., for $p \le -1$), this is at most $\ln \kappa$. For $q \in [0, 1]$, we will bound this in a manner similar to Case A:
    \begin{align*}
        \frac{dg(\mathbf{x}, p)}{dp} &\le \frac{\ln \kappa}{q} \int_{1/\kappa}^1 q \alpha^{q - 1} \ \text{d}\alpha \\
        &\le \ln \kappa \int_{1/\kappa}^1 \frac{1}{\kappa^{q - 1}} \ \text{d} \alpha \\
        &\le (\ln \kappa) \times \kappa^{1 - q} \le \kappa \ln \kappa.
    \end{align*}

    \textbf{Part 2.} $\ell = 1$. Given any trajectory $\tau$, for the vector $\mathbf{G}(\tau) \in [LH, UH]^N$, we get from Part 1 that
    \[
        \frac{d\left(\ln f(\mathbf{G}(\tau), p)\right)}{dp} \le \kappa \ln \kappa.
    \]
    Consequently, for any policy $\pi$, we get for $v^{(1)}(\pi, p) = \mathbb{E}_{\tau \sim \pi} \left[f(\mathbf{G}(\tau), p)\right]$ using linearity of expectation that
    \[
        \frac{d\left(\ln v^{(1)}(\pi, p)\right)}{p} = \mathbb{E}_{\tau \sim \pi} \left[\frac{d\left(\ln f(\mathbf{G}(\tau), p)\right)}{dp}\right] \le \mathbb{E}_{\tau \sim \pi} \left[\kappa \ln \kappa\right] = \kappa \ln \kappa.   
    \]

    $\ell = 2$. Follows by taking $\mathbf{x} = \mathbb{E}_{\tau \sim \pi}[\mathbf{G}(\tau)]$ in Part 1.

    \textbf{Part 3.} Note that $w(p) = \ln v^{(\ell)}_*(p) = \ln \max_{\pi \in \Pi} v^{(\ell)}(\pi, p) = \max_{\pi \in \Pi} \ln v^{(\ell)}(\pi, p)$. Given distinct $p, q \le 1$, denote $\pi_p = {\arg\max}_{\pi \in \Pi} \ln v^{(\ell)}(\pi, p)$. Then, by Part 1,
    \[
        \frac{w(q) - w(p)}{q - p} = \frac{\left(\max_{\pi \in \Pi} \ln v^{(\ell)}(\pi, q)\right) - \ln v^{(\ell)}(\pi_p, p)}{q - p} \le \frac{\ln v^{(\ell)}(\pi_p, q) - \ln v^{(\ell)}(\pi_p, p)}{q - p} \le \kappa \ln \kappa. \qedhere
    \]
\end{proof}

We are ready to prove Theorem \ref{thm: portfolio-with-line-search} that gives guarantees for $p$-\textsc{MeanPortfolio} (Algorithm \ref{alg:portfolios-for-rlhf}).

\begin{proof}[Proof of Theorem \ref{thm: portfolio-with-line-search}]
    We prove that (correctness) the set $\Pi'$ of policies output by the algorithm is indeed an $\alpha$-approximate portfolio, (size bound) $|\Pi'| = O\left(\frac{\ln \kappa}{\ln(1/\alpha)}\right)$, and (oracle complexity) the number of oracle calls to Problem (\ref{eq:max_aggre}) is upper bounded by
    \begin{equation}\label{eqn: oracle-complexity}
        O\left(\frac{(\ln \kappa)^2 \ln \ln N}{\ln (1/\alpha) \ln \ln (1/\alpha)}\right).
    \end{equation}
    We note that this size bound can be tightened slightly to $O\left(\frac{(\ln \kappa) (\ln \kappa + \ln \ln N)}{\ln (1/\alpha) \ln \ln (1/\alpha)}\right)$; we present the the above cleaner bound for simplicity.

    \paragraph{Correctness.} Suppose the algorithm returns $\Pi' = \{\pi_0, \pi_1, \ldots, \pi_K\}$ such that $\pi_t = {\arg\max}_{\pi \in \Pi} v^{(\ell)}(\pi, p_t)$ with $- \frac{\ln N}{\ln(1/\alpha)} = p_0 < p_1 < \ldots < p_K = 1$.

    Lemma \ref{lem: approximation-min-with-large-negative-p-agg} shows that $\pi_0$ is an $\alpha$-approximation for all $p \le p_0$. It is therefore sufficient to prove that for all $t \in [0, K - 1]$, policy $\pi_t$ is an $\alpha$-approximation for all $p \in [p_t, p_{t + 1}]$. Since $p_{t + 1} = \textsc{LineSearch}(v^{(\ell)}, p_t, \Pi, \alpha)$, Lemma \ref{lem: line-search-approximation-guarantee} implies this.
    
    
    \paragraph{Size bound.} Note that by definition, $p_{t + 1} = \textsc{LineSearch}(v^{(\ell)}, p_t, \Pi, \alpha)$. Therefore, from Lemma \ref{lem: line-search-guarantee}, we have that $\pi_t$ satisfies $v_*^{(\ell)}(p_{t + 1}) \ge \frac{v^{(\ell)}_*(p_t)}{\sqrt{\alpha}}$ for all $t$ except possibly $t = K - 1$. Therefore,
    \[
        v_*^{(\ell)}(p_K) \ge v_*^{(\ell)}(p_{K - 1}) \ge \left(\frac{1}{\alpha}\right)^{(K - 1)/2} v^{(\ell)}_*(p_0) = \left(\frac{1}{\alpha}\right)^{(K - 1)/2} v^{(\ell)}_*(- \infty).
    \]
    However, $v_*^{(\ell)}(p_K) \le HU$ and $v_*^{(\ell)}(p_0) \ge HL$, so that $\frac{v_*^{(\ell)}(p_K)}{v_*^{(\ell)}(p_0)} \le \frac{U}{L} = \kappa$, and therefore,
    \[
        \frac{K - 1}{2} \le \log_{(1/\alpha)} \kappa = \frac{\ln \kappa}{\ln (1/\alpha)}.
    \]

    \paragraph{Oracle complexity.} To bound the oracle complexity, we will show that each run of \textsc{LineSearch} calls the oracle to solve Problem (\ref{eq:max_aggre}) at most $O\left(\ln\left(\kappa |p_0|\right)\right)$ times, where $p_0 = - \frac{\ln N}{\ln(1/\alpha)}$ is the first iterate in $p$-\textsc{MeanPortfolio} and $\kappa = U/L$ is the condition number of the rewards. Since \textsc{LineSearch} is called at most $O(K) = O\left(\frac{\ln \kappa}{\ln(1/\alpha)}\right)$ times, this implies the bound (\ref{eqn: oracle-complexity}).

    To bound the number of oracle calls in \textsc{LineSearch}, we will use Lemma \ref{lem: slope-bound}.3 that upper bounds the slope of $w(p) := \ln v_*^{(\ell)}(p)$ by $m := \kappa \ln \kappa$. Suppose, for a given $p \ge p_0 = - \frac{\ln N}{\ln(1/\alpha)}$ that \textsc{LineSearch} on input $p, \alpha$ finished in $j$ iterations. Then, we have the following for \textsc{LineSearch}:
    \begin{enumerate}
        \item $p_0 \le p \le a < b \le  1$ at all times, and 
        \item except in the last iteration, we have $v(\pi, a) < \alpha \ v_*(b)$ (otherwise \textsc{LineSearch} terminates by step \ref{step: main-algorithm-while-loop}). 
    \end{enumerate}
    Denote by $a^{(j - 1)}, b^{(j - 1)}$ the value of $a, b$ in iteration $j - 1$. Then, since $b - a$ is halved in each iteration, we must have
        \[
            0 < b^{(j - 1)} - a^{(j - 1)} = (1 - p) 2^{-(j - 1)} \le (1 - p_0) 2^{-(j - 1)} \le \frac{4\ln N}{\ln(1/\alpha)} 2^{-j}. 
        \]
        However, since the algorithm does not terminate in step $j - 1$, as discussed, we must also have that
        \begin{equation}\label{eqn: line-search-lower-vs-upper-bound-guarantee-1}
            v^{(\ell)}(\pi, a^{(j - 1)}) < \alpha \cdot v^{(\ell)}_*(b^{(j - 1)}). 
        \end{equation}
        From Lemma \ref{lem: slope-bound}, we get that
        \begin{align}\label{eqn: line-search-slope-bound}
            \ln \frac{v^{(\ell)}_*(b^{(j - 1)})}{v^{(\ell)}_*(a^{(j - 1)})} \le (\kappa \ln \kappa) (b^{(j - 1)} - a^{(j - 1)}) \le \frac{4 \kappa (\ln \kappa)(\ln N)}{\ln(1/\alpha)} 2^{-j}.
        \end{align}
        However, from Lemma \ref{lem: line-search-invariant}, we get $v^{(\ell)}(\pi, a^{(j - 1)}) \ge \sqrt{\alpha} \ v^{(\ell)}_*(a^{(j - 1)})$. Putting these together with eqns. (\ref{eqn: line-search-lower-vs-upper-bound-guarantee-1}) and (\ref{eqn: line-search-slope-bound}), we get that
        \[
            \frac{1}{2} \ln(1/\alpha) \le \frac{4 \kappa (\ln \kappa)(\ln N)}{\ln(1/\alpha)} 2^{-j}.
        \]
        Therefore,
        \[
            2^j \le \frac{8 \kappa (\ln \kappa)(\ln N)}{(\ln(1/\alpha))^2}.
        \]
        Equivalently, the number of iterations $j$ in \textsc{LineSearch} is bounded by
        \[
            O\left(\ln \left(\frac{\kappa \ln N}{\ln(1/\alpha)}\right)\right) = O\left(\frac{(\ln \kappa) (\ln\ln N)}{\ln\ln (1/\alpha)}\right).
        \]
\end{proof}

\subsection{Proof of Proposition \ref{prop:warmstart}}

We prove the proposition assuming the aggregation function $v^{(1)}(\pi, p)$. The same proof applies for $v^{(2)}(\pi, p)$.

\begin{proof}
\begin{align*}
    &\max_{\pi \in \Pi} \mathbb{E}_{\tau \sim \pi} \left[ f(\textbf{G}(\tau),q) \right]
    - \mathbb{E}_{\tau \sim \pi_p} \left[ f(\textbf{G}(\tau),q) \right] \\
    &\leq \max_{\pi \in \Pi} \mathbb{E}_{\tau \sim \pi} \left[ f(\textbf{G}(\tau),q) \right]
    - \mathbb{E}_{\tau \sim \pi_p} \left[ f(\textbf{G}(\tau),p) \right] \\
    &= \max_{\pi \in \Pi} \mathbb{E}_{\tau \sim \pi} \left[ f(\textbf{G}(\tau),q) \right]
    - \max_{\pi \in \Pi} \mathbb{E}_{\tau \sim \pi} \left[ f(\textbf{G}(\tau),p) \right] \\
    &\leq \max_{\pi \in \Pi} \mathbb{E}_{\tau \sim \pi} \left[ f(\textbf{G}(\tau),q) - f(\textbf{G}(\tau),p) \right] \\
    &\leq \max_{\pi \in \Pi} \mathbb{E}_{\tau \sim \pi} \left[ (q-p) \max_{r \in (p,q)} \frac{\mathrm{d} f(\textbf{G}(\tau),r)}{\mathrm{d}r} \right] \\
    &\leq \max_{\pi \in \Pi} \mathbb{E}_{\tau \sim \pi} \left[ (q-p) \kappa \ln{\kappa} H U \right] \\
    &= (q-p) UH\kappa \ln{\kappa}.
\end{align*}

The first inequality follows from Lemma \ref{lem: social-welfare-monotonicity}, which ensures monotonicity in the \(p\) parameter. The equality comes directly from the definition of \(\pi_p\), as it maximizes \(v^{(1)}(\pi,p)\). The second inequality uses the properties of the maximum function. The next line applies the mean-value theorem to the function \( f(\textbf{G}(\tau),p) \) with respect to the variable \(p\). The subsequent bound follows from the boundedness of the reward function and Lemma \ref{lem: slope-bound}. Finally, the terms involving \(\kappa\) are replaced with \(L\) and \(U\) for conciseness.
\end{proof}

\edit{
\section{Generalized $p-$Means: Illustrative Example}
\label{sec:pmeans}

To clarify the meaning behind 
$p$-means, we constructed an illustrative example in Figure \ref{fig:abc-comparison}. In this example, three vectors $A$, $B$ and  $C$ represent different policies, where each vector's 
$i$th entry is the total reward for the $i$th stakeholder. Here, $A$ is a balanced policy, $B$ is unbalanced with the largest sum, and $C$ lies in between. Figure \ref{fig:abc-comparison} (c)  plots the 
$p$-mean values of these vectors, showing that varying $p$
 leads to different optimal policy.

\begin{figure}[t]
  \centering
  \subfigure[\label{fig1}][Entries of $A,B,C$]{
      \includegraphics[width=.32\columnwidth]{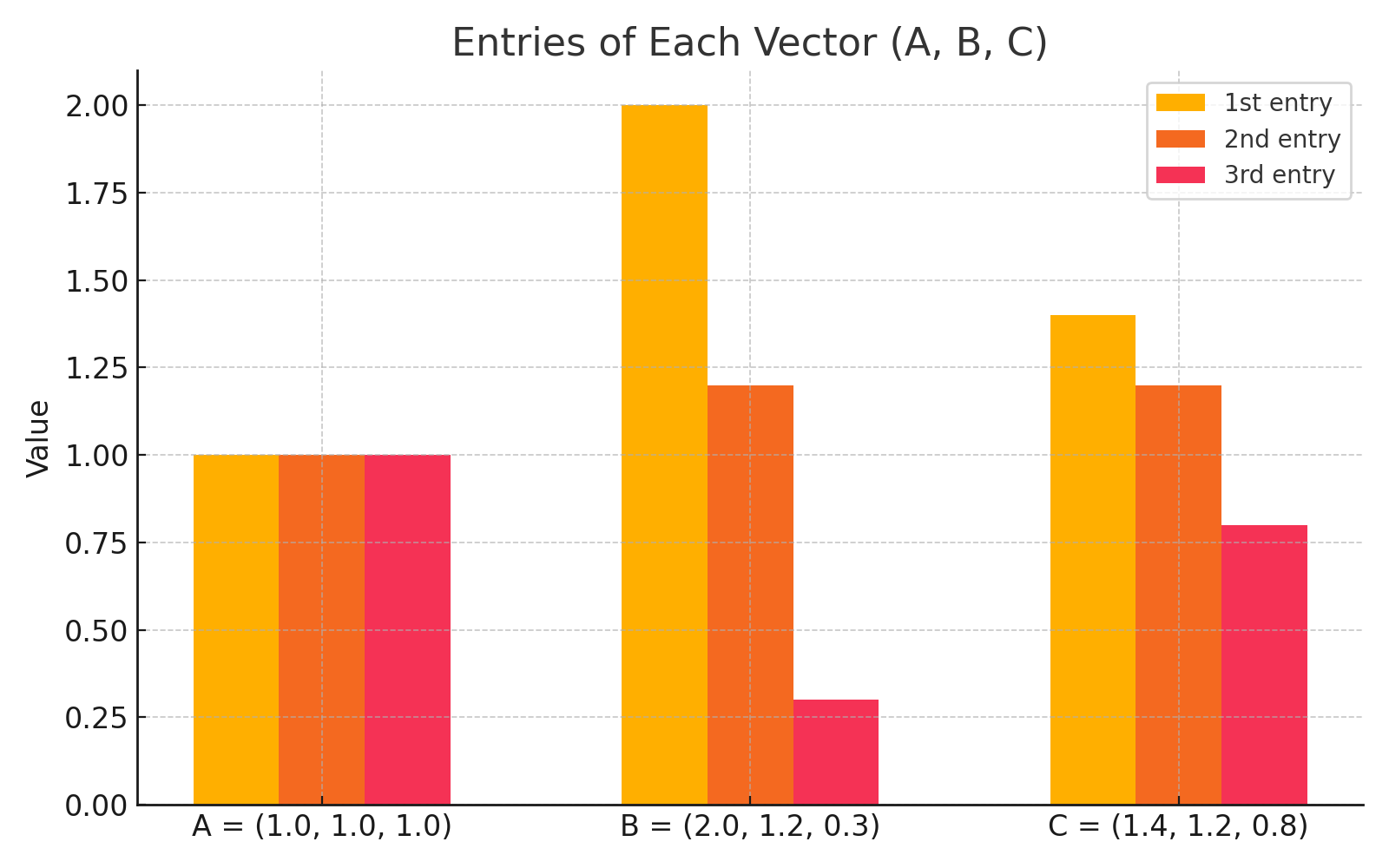}
  }\hfill
  \subfigure[\label{fig2}][$p-$means at selected $p$ values.]{
      \includegraphics[width=.32\columnwidth]{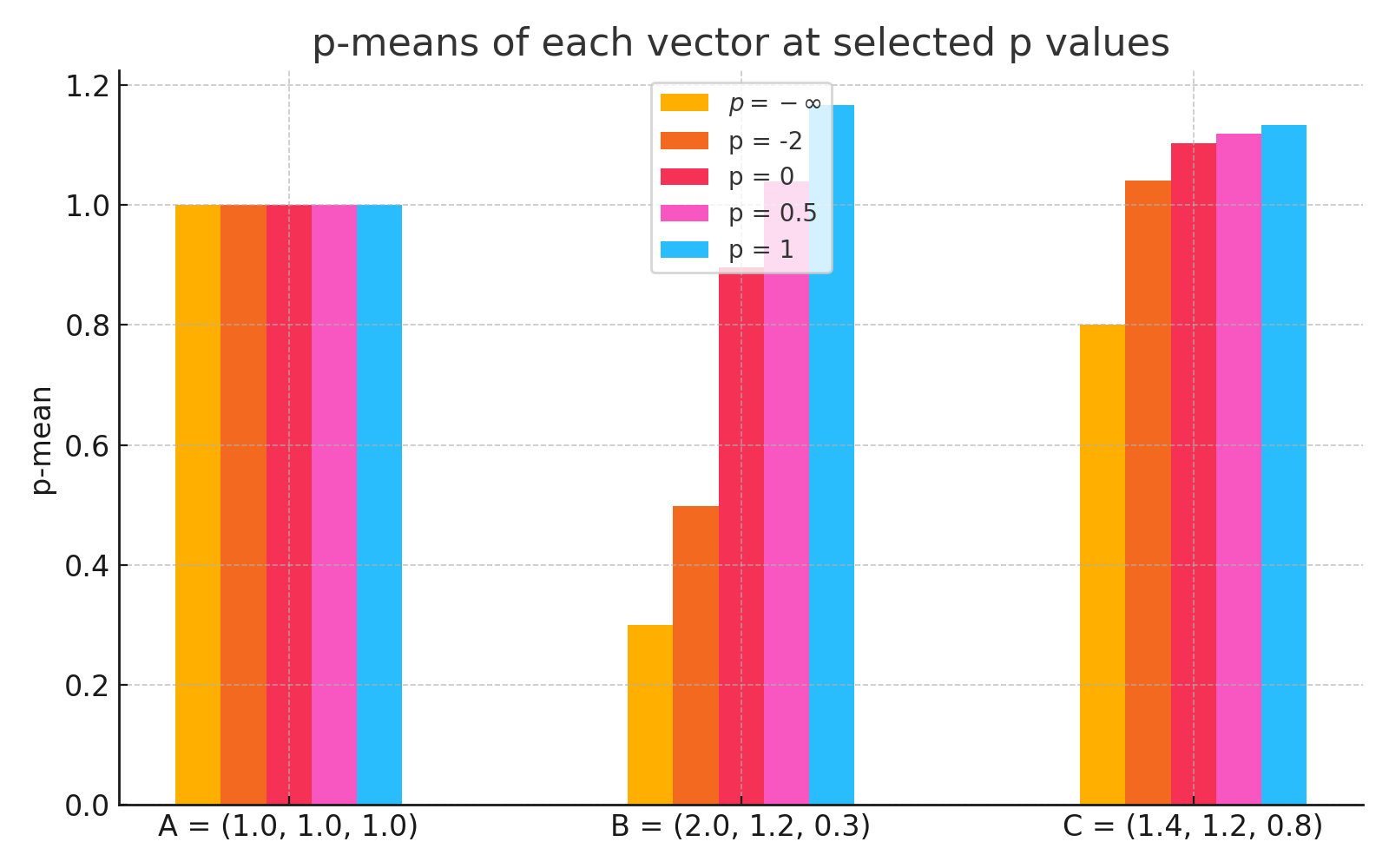}
  }\hfill
  \subfigure[\label{fig3}][$p-$means across different $p$ values.]{
      \includegraphics[width=.32\columnwidth]{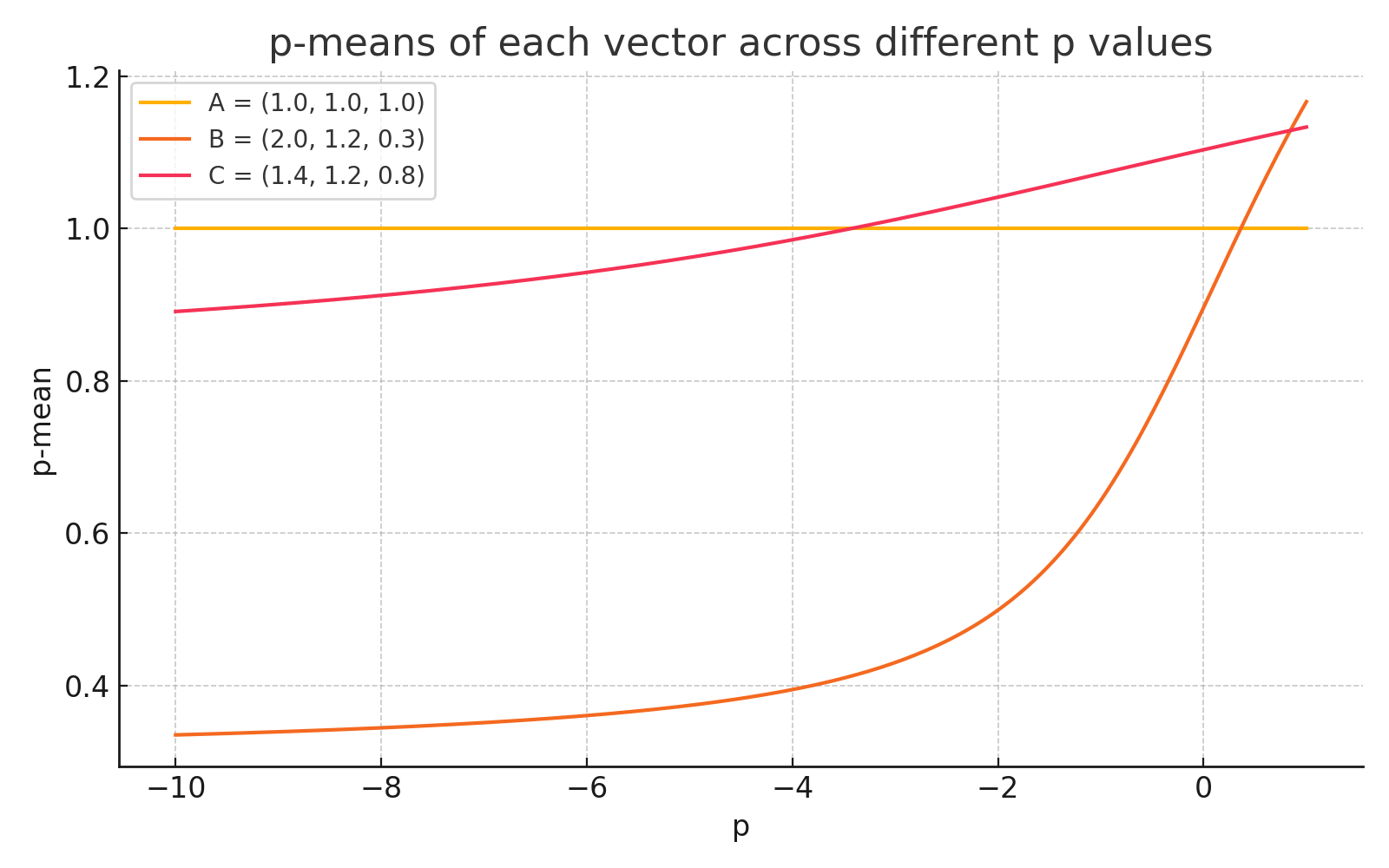}
  }
  \caption{Side-by-side comparison of entries and $p$-means for vectors $A,B,C$.}
  \label{fig:abc-comparison}
\end{figure}

\section{Comparison with other MORL Algorithms}
\label{sec:comparisons}

\cite{Yu24} optimize a generalized Gini welfare assuming a fixed weight vector. We address a fundamentally different scenario, where decision-makers are uncertain about the appropriate fairness criterion (e.g., the choice of 
$p$ in $p$-means or the selection of weights in Gini welfare) in advance. Optimizing a single policy for one fairness parameter can lead to poor outcomes under different criteria (see the approximation qualities of random baselines in our experiments). Our method computes a small, representative set of policies covering the entire spectrum of fairness criteria. This allows decision-makers to select confidently from this set, without worrying about suboptimality under other potential choices of $p$.

\cite{yang19} focuses on weighted linear objectives and learns a single policy, whereas we focus on $p$-means and compute a portfolio (multiple policies). \cite{Alegre+2023}  builds portfolios focusing on weighted linear objectives (as opposed to $p-$means) and does not offer guarantees on portfolio size or oracle complexity. \cite{reymond22} trains a single policy to approximate the Pareto frontier, which does not directly correspond to any social welfare function. To the best of our knowledge, our work is the first to (1) propose a multi-policy approach in 
$p$-means and (2) provide theoretical guarantees on portfolio size, approximation quality, and oracle complexity simultaneously.

In Table \ref{tab: approximations-gpi}, we also present comparisons between $p$-\textsc{MeanPortfolio} and the Generalized Policy Improvement (GPI) algorithm of \cite{Alegre+2023}, which maintains a set of policies $\Pi' \subseteq \Pi$ and iteratively chooses the weight vector $\mathbf{w}$ with the highest difference between the optimal policy value for $\mathbf{w}$ in $\Pi$ vs in $\Pi'$. Our implementation of GPI uses the differential evolution solver from scipy for this global optimization step. The results in the table indicate that our approach achieves significantly better approximation quality. 
We omit other details of GPI here and refer the interested reader to \cite{Alegre+2023}.  

\begin{table}[t]
\caption{A comparison of actual approximation ratios across various portfolio sizes for $p$-\textsc{MeanPortfolio} and our implementation of GPI.}
\label{tab: approximations-gpi}
\centering
\begin{tabular}{|c|c|c|}
\hline
\begin{tabular}[c]{@{}c@{}}Portfolio\\ Size\end{tabular} &
  \begin{tabular}[c]{@{}c@{}}$p$-\textsc{Mean}-\\ \textsc{Portfolio}\end{tabular} &
  GPI \\ \hline
\multicolumn{3}{c}{\rule{0pt}{10pt} Resource Allocation after Natural Disaster} \\ \hline
1 & \textbf{0.706} & 0.514 \\ \hline
2 & \textbf{0.904} & 0.520 \\ \hline
3 & \textbf{0.999} & 0.520 \\ \hline
4 & \textbf{0.100} & 0.520 \\ \hline
 \multicolumn{3}{c}{\rule{0pt}{10pt} Healthcare Intervention} \\ \hline
1 & \textbf{0.924} & 0.347 \\ \hline
2 & \textbf{0.982} & 0.641 \\ \hline
3 & \textbf{0.982} & 0.641 \\ \hline
4 & \textbf{0.982} & 0.641 \\ \hline
5 & \textbf{0.993} & 0.641 \\ \hline
6 & \textbf{0.999} & 0.641 \\ \hline
7 & \textbf{1.000} & 0.641 \\ \hline
\end{tabular}
\end{table}

\section{Additional Experimental Results}
\label{sec:additional_results}

Figures \ref{fig: pvalues_taxi} and \ref{fig: fraction-of-need-met-natural-disaster} illustrate how the optimal policies for different $p$ values in the portfolio lead to varying impacts for the stakeholders. These portfolios are all obtained using $p$-\textsc{MeanPortfolio}. We also include the values of $p$ chosen by $p$-\textsc{MeanPortfolio} to construct the portfolio in Table \ref{tab: p-values}.

\begin{table}[]
\caption{The set of values of $p$ chosen by $p$-\textsc{MeanPortfolio} to obtain the corresponding portfolios.}
\centering
\label{tab: p-values}
\begin{tabular}{|cc|}
\hline
\multicolumn{1}{|c|}{\begin{tabular}[c]{@{}c@{}}Portfolio\\ size\end{tabular}} & $p$ Values                                            \\ \hline

\multicolumn{2}{|c|}{Resource Allocation After Natural Disaster} \\ \hline
\multicolumn{1}{|c|}{1} & -0.829                                \\ \hline
\multicolumn{1}{|c|}{2} & -4.864, -0.466                        \\ \hline
\multicolumn{1}{|c|}{3} & -11.136, -2.034, 0.621                \\ \hline
\multicolumn{1}{|c|}{4} & -15.29, -2.054, 0.517, 0.758          \\ \hline
\multicolumn{2}{|c|}{Healthcare Intervention}                   \\ \hline
\multicolumn{1}{|c|}{1} & -1.325                                \\ \hline
\multicolumn{1}{|c|}{2} & -2.864, 0.034                         \\ \hline
\multicolumn{1}{|c|}{3} & -4.333, -0.333, 0.333                 \\ \hline
\multicolumn{1}{|c|}{4} & -6.641, -0.433, -0.075, 0.463         \\ \hline
\multicolumn{1}{|c|}{5} & -9.216, -4.108, -0.437, -0.078, 0.461 \\ \hline
\multicolumn{1}{|c|}{6}                                                        & -13.801, -6.4, -0.594, -0.22, 0.085, 0.542            \\ \hline
\multicolumn{1}{|c|}{7}                                                        & -17.793, -3.698, -0.549, -0.186, -0.038, 0.222, 0.611 \\ \hline
\multicolumn{2}{|c|}{Taxi Environment} \\ \hline
\multicolumn{1}{|c|}{1} & -2.0                                \\ \hline
\multicolumn{1}{|c|}{2} & -3.89, 0.87                       \\ \hline
\multicolumn{1}{|c|}{3} & -6.21, 0.72, 0.86               \\ \hline
\multicolumn{1}{|c|}{8} & -13.16, -10.06, -6.17, -4.82, 0.66, 0.7, 0.77, 0.89       \\ \hline
\multicolumn{1}{|c|}{10} & -27.03, -13.02, -6.25, 0.66, 0.71, 0.78, 0.81, 0.86, 0.89, 0.95   \\ \hline
\end{tabular}
\end{table}

\begin{figure}
    \centering
    \includegraphics[width=0.6\linewidth]{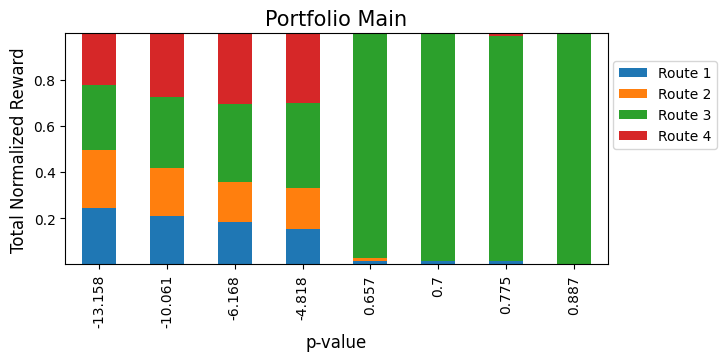}
    \caption{Normalized total reward for each route under different policies in the portfolio generated by $p$-\textsc{MeanPortfolio} in the taxi environment.}
    \label{fig: pvalues_taxi}
\end{figure}
}

\begin{figure}
    \centering
    \includegraphics[width=0.6\linewidth]{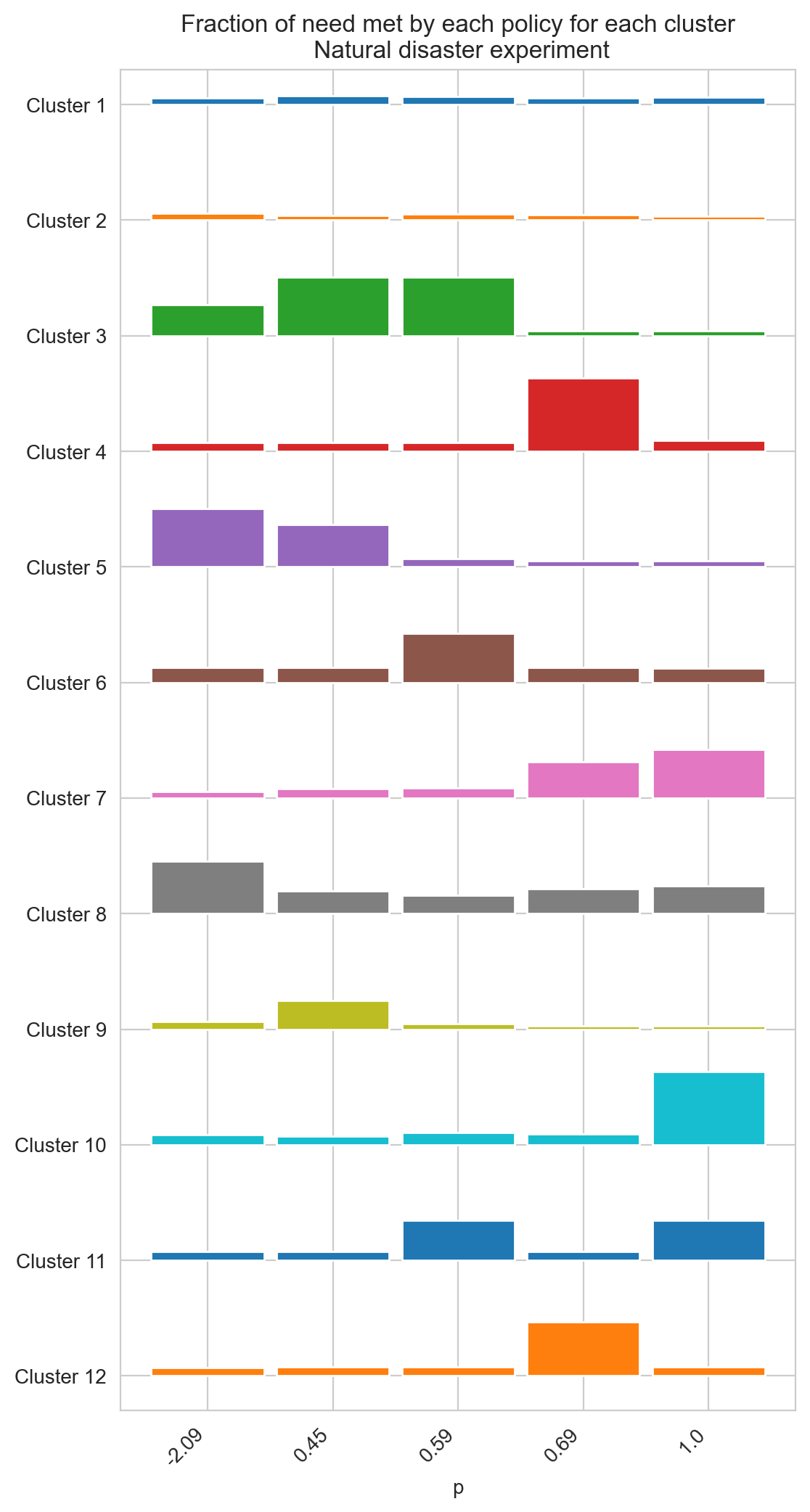}
    \caption{Fraction of need meet for various clusters by various policies in the portfolio obtained by $p$-\textsc{MeanPortfolio} after $H = 4$ intervention steps for the natural disaster experiment. Note that different solutions in the portfolio emphasize different clusters.}
    \label{fig: fraction-of-need-met-natural-disaster}
\end{figure}

\section{Experimental Details}
\label{sec:expdetail}

We include the details of various experiments here.

\subsection{Taxi Environment}
\subsection*{Problem Setting}
This environment consists of a taxi agent whose task is to deliver passengers from source to destination. The world consists of a 6x6 grid and based on the environment setup of \cite{fan2022welfare}, we consider 4 source-destination pairs. Whenever a taxi moves to a source point, it can pick a passenger, move to destination and drop the passenger, thus receiving a reward. Since some routes could be easier to serve, the agent has to decide which route to serve more often. Rewards are multi-dimensional, each dimension corresponding to each route. 
We consider the following source-destination pairs:
$source\_coords = [[0,0], [0,5], [3,0], [1,0]]$,
$destination\_coords = [[1,5], [5,0], [3,3], [0,3]]$
\subsection*{MDP Structure}
\subsubsection*{State Space (\(S\))}
The state space consists of information on the location of the taxi, location of passengers and whether passengers have been picked up by the taxi at the moment.
\subsubsection*{Action Space (\(S\))}
The agent can take one of 6 actions: move north, south, east, west and pick or drop a passenger in the current grid cell.
\subsubsection*{Reward (\(R\))}
The reward function gives reward 0 for moving, -10 reward for taking invalid action of picking or dropping a passenger at wrong coordinating, and 30 reward for dropping a passenger at the right destination.
\subsubsection*{Learning Policy}
Given a scalarization function (or welfare function ), the task is to find a policy $\pi^*$ that maximizined the expected value of scalarized return. Specifically:
\begin{align}
    \pi^* &= \arg\max_{\pi} \mathbb{E}_{\tau \sim \pi}\left[f\big(\textbf{G}(\tau), p\big)\right].
\end{align}

We learn this policy using the Welfare Q-Learning algorithm proposed in \cite{fan2022welfare}. For every problem setting, we train the policy for 200 episodes. Every episode is a finite horizon problem with 1000 timesteps. We consider discount factor gamma as 0.99.

\subsection{Natural Disaster}\label{subsec:nat-disaster}

\subsection*{Problem Setting}
In this synthetically generated example, suppose that in the wake of a natural disaster, a centralized aid agency must determine how to allocate resources to a set of 12 clusters ($N = 12$). Each cluster is characterized by their population density (high or low), proximity to critical infrastructure (near or far), and the predominant income level of its residents (low, middle, or high). 
%
\begin{table*}[ht]
\centering\small
\begin{tabular}{@{}|c|c|c|c|c|c|@{}}
\hline
\textbf{Cluster ID} & \textbf{Density} & \textbf{Proximity} & \textbf{Income Level} & \textbf{Total Population} & \textbf{Initial Need} \\
\hline
1  & High & Far  & High-Income   & 148  & 150  \\
2  & High & Far  & Low-Income    & 307  & 500  \\
3  & High & Far  & Middle-Income & 616  & 650  \\
4  & High & Near & High-Income   & 816  & 300  \\
5  & High & Near & Low-Income    & 1405 & 1000 \\
6  & High & Near & Middle-Income & 2782 & 950  \\
7  & Low  & Far  & High-Income   & 74   & 1000 \\
8  & Low  & Far  & Low-Income    & 203  & 350  \\
9  & Low  & Far  & Middle-Income & 396  & 300  \\
10 & Low  & Near & High-Income   & 36   & 50   \\
11 & Low  & Near & Low-Income    & 113  & 100  \\
12 & Low  & Near & Middle-Income & 230  & 100  \\
\hline
\end{tabular}
\caption{Clustered population data including density, proximity, income level, total population, and initial need.}\label{tab:synth-data}
\end{table*}

\subsection*{MDP Structure}

The problem is formulated as an MDP $\mathcal{M}$, with the following components:

\subsubsection*{State Space (\(S\))}
The state \(s = (s_1, s_2, \dots, s_N)\) represents the current resource needs of the \(N\) clusters. Each \(s_c\) denotes the total remaining need for cluster \(c\). The state space is discretized, with each \(s_c\) taking values in increments of $b$ in $[0, B]$. In our setting, $b = 50$ and $B = 150$.

\subsubsection*{Action Space (\(A\))}
Actions \(a = (a_1, a_2, \dots, a_N)\) represent the allocation of resources to clusters, where: \(a_c \in \{0, b, 2b, \dots, B\}\) represents the resources allocated to cluster \(c\) and the total allocation \(\sum_{c=1}^N a_c\) cannot exceed the total budget $B$.

\subsubsection*{Transition Probability Matrix (TPM, \(P(s'|s, a)\))}
The state transitions depend on the current state \(s\) and the action \(a\). If \(a_c \geq s_c\), \(s_c' = 0\) with probability 1 (this cluster's need is fully met). If \(a_c < s_c\) the unmet need will reduce by the allocation, that is \(s_c' = s_c - a_c\), with 70\% probability. However, there is a 30\% probability that the cluster will observe an increase in need by $b$ units, meant to reflect ballooning needs when immediate action is not taken.

\subsubsection*{Generation of Policies \& Reward Functions}
We consider combinations of ``reasonable" policies a decision-maker might take in this setting. First, a valid policy would not allocate resources to a zero-need cluster. Secondly, we consider the following priorities that a decision-maker might have when allocating funds among nonzero unmet need clusters: (1) lowest income, (2) highest population, (3) highest unmet need, (4) highest unmet need per capita, (5) high population density regions, and (6) far from critical infrastructure. In every time period, for every feasible state, we apply one of these policies, a random convex weighting of these policies, or a randomized priority to generate 10,000 different feasible policies.

The expected reward accrued by each cluster is the average of the sum of the expected fraction of the initial need met under a policy and the fraction of the expected total allocation awarded to that cluster over the horizon.




\subsection{Healthcare Intervention}
The Healthcare Intervention problem is based on the large-scale mobile-health program run by the NGO ARMMAN \cite{ARMMAN}. The goal of the program is to maximize engagement of beneficiaries with the program using limited service call interventions. \edit{Based on previous works, we model this problem as RMAB problem where we have multiple arms or beneficiaries, a budget on the number of service calls to give every week.} For every beneficiary, we have information on their listenership with the voice call. This is considered as an engaging or non-engaging state if the listenership is above or below 30-seconds threshold respectively. One week of time is considered as one timestep. Finally, every week, the action can be to place (active) or not to place a live service call (passive). Additionally, for every beneficiary, we have information on their socio-demographic characteristics such as age, income, education. We use real world data from service quality improvement conducted by ARMMAN in January 2022 \cite{verma2023expanding}.
Further, for our experiments, we sample data for $2100$ beneficiaries, and run the experiment for $H=12$ timesteps.

\subsection*{Generation of Policies \& Reward Functions}
The default reward incentivizes agents to be in engaging state. Thus, agents receive reward 1 if they are in engaging state, and 0 otherwise. However, due to varying policy needs over time or over different geographies, health workers often have to prioritize some beneficiaries more than others. 
\edit{To capture these varying priorities, we leverage the Social Choice Language Model framework \cite{verma2024} to derive reward functions from natural-language commands. Given a command specifying $N $preferences (each indicating a subgroup to prioritize) this method generates two sets of reward functions: 
\begin{enumerate}
    \item \textbf{Individual preferences:} one reward function per preference (total of $N$ reward functions) which we treat as each stakeholder's reward function.
    \item \textbf{Balancing functions:} a collection of reward functions that trade off among the $N$ preferences, whose corresponding policies form our feasible set $\Pi$.
\end{enumerate}
In our experiments, we set $N = 59$ and construct  $|\Pi| = 200$ balancing policies as feasible set.}

\subsection*{Learning Policy}
It is computationally intractable to optimally solve the RMAB problem \cite{papadimitriou1999complexity}. Thus, we use the popular Whittle Index Heuristic \cite{whittle98} to solve RMAB problem for a given single reward function. Specifically, Whittle Index quantifies the reward gain achieved for every arm in every state if we took the active action as compared to if we took the passive action. The policy then chooses the arms with highest whittle indices within the budget limit. 

\subsection*{Baseline Policy}
The baseline policy mentioned in Figure \ref{fig: sclm-education-age} is trained on the default reward.

\subsection{Other Details}
\edit{
\subsection*{Initial $p$ for \textsc{Budget-ConstrainedPortfolio}}
For \textsc{Budget-ConstrainedPortfolio}, we fix $p_0 = - 100$.

\subsection*{Computing actual approximation factor}
The approximation factor $\mathcal{Q}(\Pi')$ for any portfolio $\Pi'$ is computed via a grid search over 1000 points in $(\infty, 1]$ for the natural disaster environment and the healthcare intervention problem. For the taxi environment, we used 50 points due to the computational burden of solving each problem.

\subsection*{Choice of $\alpha$}
For a given portfolio size $K$, we chose the smallest $\alpha \in \{0.05, 0.10, 0.15, \dots,0.90, 0.95, 0.99\}$ among which the resulting portfolio has size $K$. }



\end{document}